\theoremstyle{plain}
\newtheorem{theorem}{Theorem}[section]
\newaliascnt{proposition}{theorem}
\newtheorem{proposition}[proposition]{Proposition}
\newaliascnt{lemma}{theorem}
\newtheorem{lemma}[lemma]{Lemma}
\newaliascnt{corollary}{theorem}
\newtheorem{corollary}[corollary]{Corollary}
\newaliascnt{definition}{theorem}
\theoremstyle{definition}
\newtheorem{definition}[definition]{Definition}
\newaliascnt{assumption}{theorem}
\newtheorem{assumption}[assumption]{Assumption}
\newaliascnt{remark}{theorem}
\theoremstyle{remark}
\newtheorem{remark}[remark]{Remark}
\icmltitlerunning{FAB-PPI}
\newlist{enuminline}{enumerate*}{1}
\setlist[enuminline]{label=(\roman*)}
\newcommand{\vol}{\mathrm{vol}}
\newcommand{\E}{\mathbb E}
\newcommand{\var}{\mathrm{var}}
\newcommand{\cov}{\mathrm{cov}}
\newcommand{\Normal}{\mathcal N}
\newcommand{\bbR}{\mathbb R}
\newcommand{\bbE}{\mathbb E}
\newcommand{\bbP}{\mathbb P}
\newcommand{\calC}{\mathcal C}
\newcommand{\calR}{\mathcal R}
\newcommand{\calT}{\mathcal T}
\newcommand{\loA}{\underline{A}}
\newcommand{\upA}{\overline{A}} 
\newcommand{\cv}{\mathrm{cv}} 
\newcommand{\pp}{\mathrm{pp}}
\newcommand{\HS}{\mathrm{HS}}
\newcommand{\erfi}{\mathrm{erfi}}
\newcommand{\tX}{\widetilde X}
\newcommand{\OneFOne}{{}_1F_1}
\newcommand{\PP}{\mathrm{\texttt{PP}}}
\newcommand{\PPpp}{\mathrm{\texttt{PP+}}}
\newcommand{\BPP}{\mathrm{\texttt{BPP}}}
\newcommand{\BPPpp}{\mathrm{\texttt{BPP+}}}
\newcommand{\FABPPI}{\mathrm{\texttt{FABPP}}}
\newcommand{\dHaus}{d_{\mathrm{H}}}
\begin{document}

\twocolumn[
\icmltitle{FAB-PPI: Frequentist, Assisted by Bayes, Prediction-Powered Inference}



\icmlsetsymbol{equal}{*}

\begin{icmlauthorlist}
    \icmlauthor{Stefano Cortinovis}{oxford}
    \icmlauthor{Fran\c cois Caron}{oxford}
\end{icmlauthorlist}

\icmlaffiliation{oxford}{Department of Statistics, University of Oxford}

\icmlcorrespondingauthor{Stefano Cortinovis}{cortinovis@stats.ox.ac.uk}

\icmlkeywords{Confidence intervals, Bayesian methods, heavy-tailed priors, horseshoe, mean estimation, statistical inference, semi-supervised inference}

\vskip 0.3in
]



\printAffiliationsAndNotice{}  

\begin{abstract}
    Prediction-powered inference (PPI) enables valid statistical inference by combining experimental data with machine learning predictions. When a sufficient number of high-quality predictions is available, PPI results in more accurate estimates and tighter confidence intervals than traditional methods. In this paper, we propose to inform the PPI framework with prior knowledge on the quality of the predictions. The resulting method, which we call frequentist, assisted by Bayes, PPI (FAB-PPI), improves over PPI when the observed prediction quality is likely under the prior, while maintaining its frequentist guarantees. Furthermore, when using heavy-tailed priors, FAB-PPI adaptively reverts to standard PPI in low prior probability regions. We demonstrate the benefits of FAB-PPI in real and synthetic examples.
\end{abstract}

\section{Introduction}
Statistical inference crucially relies on the availability of high-quality labelled data to draw actionable conclusions. As the scale of machine learning models keeps growing, their increasingly accurate predictions become a tempting alternative to labelled data in fields where the latter are traditionally scarce, such as proteomics \citep{Jumper2021}. However, blindly using potentially biased predictions as a surrogate for labelled data voids the statistical validity of the conclusions drawn. To address this, prediction-powered inference~\citep{Angelopoulos2023} provides a general framework for statistical inference in the presence of a large number of black-box predictions by combining them with a smaller number of labelled observations, which are used to \textit{correct} for the discrepancy between the predictions and the true labels. The estimators and confidence intervals (CIs) resulting from PPI are statistically valid regardless of the machine learning model used. Moreover, when the predictions are good, PPI results in more accurate estimates and shorter CIs than traditional methods that rely solely on labelled data. \looseness=-1

More formally, for an input/output pair $(X,Y)\sim \bbP=\bbP_X\times\bbP_{Y|X}$ and a convex loss function $\mathcal{L}_\theta(x,y)$, where $\theta\in\bbR^d$, we wish to estimate
\begin{align}
\theta^\star=\underset{\theta\in\bbR^d}{\arg\min}~~\E[\mathcal{L}_\theta(X,Y)].
\label{eq:defthetastar1}
\end{align}
For instance, if $\mathcal{L}_\theta(x,y)=(\theta - y)^2/2$ is the squared loss, then $\theta^\star=\E[Y]$.
We assume that we have $n$ labelled observations $\{(X_i,Y_i)\}_{i=1}^n$ iid from $\bbP$ and $N$ unlabelled observations $\{\widetilde X_{i}\}_{i=1}^N$ iid from $\bbP_X$, which are also independent of the labelled data. The number of unlabelled observations is typically much larger than the number of labelled ones, $N\gg n$. Additionally, we are provided with a machine learning prediction rule $f$, that can be used to predict an output $f(x)$ at any input $x$. PPI aims to obtain an estimator $\widehat \theta$ and a $(1-\alpha)$ confidence interval $\calC_\alpha^\pp$ for $\theta^\star$, which take advantage of $f$. Under mild assumptions, $\theta^\star$ can be expressed as the solution to \looseness=-1
\begin{align}
g_{\theta^\star}:=\E[\mathcal{L}_{\theta^\star}'(X,Y)]=0,
\label{eq:defthetastar2}
\end{align}
where $\mathcal{L}_\theta'$ is a subgradient of $\mathcal{L}_\theta$ with respect to $\theta$. It is easy to see that the quantity above can be decomposed as $g_{\theta} = m_\theta + \Delta_\theta$, where
\begin{align}
    m_\theta&:=\E[\mathcal{L}_{\theta}'(X,f(X))], \label{eq:m_theta} \\
    \Delta_\theta&:=\E[\mathcal{L}_{\theta}'(X,Y)-\mathcal{L}_{\theta}'(X,f(X))] \label{eq:Delta_theta}.
\end{align}
In this setting, $m_\theta$ represents a measure of fit of the predictor, whereas $\Delta_\theta$, called the \textit{rectifier}, accounts for the discrepancy between the predicted outputs $f(X)$ and the true outputs $Y$, effectively quantifying prediction quality.     For example, under the squared loss, $\Delta_\theta=\E[f(X)-Y]$ and a \textit{good} predictor $f$ is one such that $\Delta_\theta$ is close to zero, i.e.~$f(x) \simeq \mathbb{E}[Y | X = x]$.
    Note that, while in this case $\Delta_\theta$ does not depend on $\theta$, this is not true in general.

By estimating the two quantities $m_\theta$ and $\Delta_\theta$, \citet{Angelopoulos2023a} derive an estimator and a CI for $\theta^\star$, which use both labelled and unlabelled data. The resulting CI is shorter than the classical confidence interval based solely on the labelled data when $N \gg n$ and $f$ is accurate because, in this case, $m_\theta$ can be estimated with low variance using the unlabelled data, while $\Delta_\theta$ is close to zero.

Standard PPI employs off-the-shelf estimation and CI procedures for $\Delta_\theta$, which do not take advantage of any prior knowledge on the quality of the machine learning model $f$.
However, in many applications, we expect the latter's predictions to be
\begin{enuminline}
    \item usually very good, but
    \item sometimes prone to large errors and hallucinations.
\end{enuminline}
We propose to encode such an inductive bias with a horseshoe prior $\pi_\theta$ on $\Delta_\theta$~\citep{Carvalho2010}, which accommodates the aforementioned properties by exhibiting
\begin{enuminline}
    \item an infinitely tall spike at the origin, and
    \item Cauchy-like tails at infinity.
\end{enuminline}
In order to construct valid confidence regions for $\Delta_\theta$ using the horseshoe prior, we resort to the frequentist-assisted by Bayes (FAB) framework \citep{Pratt1961,Pratt1963,Yu2018}.
This approach provides confidence regions such that their expected length is lower for rectifiers $\Delta_\theta$ that have high probability under $\pi_\theta$, and larger otherwise.
While the resulting confidence regions have exact coverage for any prior $\pi_\theta$, the horseshoe prior is particularly well-suited for PPI. Being concentrated around the origin, it produces shorter confidence regions when the predictions are good, i.e.~$||\Delta_\theta|| \simeq 0$.
At the same time, its heavy tails ensure robustness when the predictions are poor. Indeed, as shown by \citet{Cortinovis2024}, if $||\Delta_\theta||\gg0$, the FAB procedure with the horseshoe prior reverts to the traditional CI based on the sample mean.

In this work, we introduce FAB-PPI, a Bayes-assisted approach for PPI that encodes prior information on the quality of the machine learning predictions by specifying a prior for the rectifier $\Delta_\theta$. FAB-PPI is:
\begin{itemize}\setlength\itemsep{0em}
    \item Statistically valid, as its confidence regions have correct coverage for any choice of prior;
    \item Efficient, as its confidence regions have smaller expected length when the predictions are good;
    \item Robust, as it reverts to standard PPI when the predictions are poor, if the horseshoe prior is used;
    \item Modular, as it can be used in conjunction with power tuning \citep{Angelopoulos2023a}.
\end{itemize}

The remainder of the paper is organised as follows. \cref{sec:relatedwork} reviews related work. \cref{sec:background} provides background on control variates, PPI, and FAB confidence regions. \cref{sec:fab-ppi} describes our novel approach for PPI, called FAB-PPI. \cref{sec:experiments} demonstrates the benefits of FAB-PPI on synthetic and real data. Finally, \cref{sec:discussion} discusses limitations and further extensions of our approach.

\section{Related Work}
\label{sec:relatedwork}
PPI \citep{Angelopoulos2023} was introduced to obtain shorter CIs for the parameters of interest by leveraging machine learning predictions in semi-supervised settings. PPI has since been extended in multiple directions. PPI\texttt{++} \citep{Angelopoulos2023a} proposes a different, loss-based formulation of PPI, leading to a more computationally efficient procedure, along with an additional power tuning parameter to enhance PPI's performance. Stratified PPI \citep{Fisch2024} improves upon PPI by employing a data stratification strategy. Cross PPI \citep{Zrnic2024a} demonstrates how the training of $f$ can be included in the PPI pipeline. Active statistical inference \citep{Zrnic2024} applies an active learning approach to select which inputs from the unlabelled set should be labelled. Closer to our work, Bayesian PPI \citep{Hofer2024} considers an alternative PPI estimator motivated by Bayesian ideas. However, their approach provides Bayesian credible intervals, which do not offer frequentist guarantees. Additionally, their approach achieves similar experimental performance to PPI, while we demonstrate that FAB-PPI may significantly improve upon PPI. \looseness=-1

As discussed in \citet{Angelopoulos2023,Angelopoulos2023a}, PPI has close ties with control variates for variance reduction \citep[\S4.1]{Glasserman2003}. In the case of mean estimation, the form of the PPI estimator is similar to the one proposed by \citet{Zhang2019}. PPI is also related to work in semiparametric inference with missing data \citep{Robins1995}.

The concept of Bayes-optimal confidence regions originates from the work of \citet{Pratt1961,Pratt1963}. Pratt's approach, which has been given the name FAB by \citet{Yu2018}, has since been extended in multiple directions \citep{Brown1995,Farchione2008,Kabaila2013,Kabaila2022,Yu2018,Hoff2019,Hoff2023}. In particular, \citet{Cortinovis2024} show that, when combined with priors with power-law tails, FAB provides robust confidence regions that revert to classical ones in the presence of outliers. \citet{Hoff2023} applied FAB in a predictive supervised context, showing that it can lead to more accurate predictions than standard methods.

\section{Background}
\label{sec:background}

\subsection{Control Variates}
The method of control variates is a standard variance-reduction technique in Monte Carlo approximation \citep[\S4.1]{Glasserman2003}. For simplicity, we present the method in the scalar case, but extensions to the multivariate setting are available.
Let $(Z,Y)$ be a pair of real-valued random variables, and assume we are interested in estimating $\E [Y]$ based on an iid sample $\{(Z_i,Y_i)\}_{i=1}^n$. Assuming $\mu=\E [Z]$ is known, one defines the control-variate estimator (CVE)
\begin{align}
\widehat Y^{\cv}_\lambda=\overline Y - \lambda (\overline Z - \mu)=\frac{1}{n}\sum_{i=1}^n \left( Y_i - \lambda(Z_i-\mu)\right), \label{eq:cvestimator}
\end{align}
where $\lambda\in\bbR$ is a tuning coefficient and $\overline Z$ and $\overline Y$ are the sample means of $(Z_i)$ and $(Y_i)$, respectively. The centred random variable $Z_i-\mu$ serves as a control variate to estimate $\E[Y]$. The CVE is a consistent and unbiased estimator of $\E[Y]$ with $\var(\widehat Y^{\cv}_\lambda)=(\var(Y)-2\lambda \cov(Z,Y)+\lambda^2 \var(Z))/n$, while $\var(\overline Y)=\var(Y)/n$. 
Therefore, the CVE has smaller variance than $\overline Y$ whenever
$\lambda\in(\min\{0,2\lambda^\star\},\,\max\{0,2\lambda^\star\})$, where the optimal coefficient is $\lambda^\star=\cov(Z,Y)/\var(Z)$.
In this case, $\var(\widehat Y^{\cv}_{\lambda^\star}) =(1-\rho_{Z,Y}^2)\var(\overline Y)$, where $\rho_{Z,Y}$ is the correlation between $Z$ and $Y$. The more correlated $Z$ and $Y$, the larger the variance reduction.
By plugging the estimator \looseness=-1
\begin{align}
\widehat \lambda = \frac{\sum_{i=1}^n (Z_i-\overline Z)(Y_i-\overline Y) }{\sum_{i=1}^n (Z_i-\overline Z)^2 }
\label{eq:estimatorlambda}
\end{align}
for $\lambda$ in \cref{eq:cvestimator}, one has
$$
\frac{\widehat Y^\cv_{\widehat \lambda} - \E[Y]}{s/\sqrt{n}}\to \Normal(0,1)
$$
as $n\to\infty$, where $s$ is the sample standard deviation of $\{(Y_i - \widehat\lambda Z_i)\}_{i=1,\ldots,n}$. Hence, $\widehat Y^\cv_{\widehat \lambda} \pm z_{1-\alpha/2}s/\sqrt{n}$ is an asymptotically valid $(1-\alpha)$ CI for $\E[Y]$, whose asymptotic width is $2z_{1-\alpha/2}\sqrt{1-\rho_{Z,Y}^2}\sqrt{\var(Y)}/\sqrt{n}$.

\subsection{Prediction-Powered Inference}
PPI~\citep{Angelopoulos2023} defines an estimator $\widehat \theta$ and a CI $\calC_\alpha^\pp$ for a parameter of interest $\theta^\star$ satisfying \cref{eq:defthetastar2}.
In particular, let $\widehat m_\theta$ and $\widehat \Delta_\theta$ be some estimators of $m_\theta$ and $\Delta_\theta$. Using \cref{eq:defthetastar2}, the estimator $\widehat \theta$ is defined as the solution, in $\theta$, to the equation
\begin{align}
    \widehat m_\theta + \widehat \Delta_\theta=0.
\end{align}
Similarly, let $\calR_\delta$ and $\calT_{\alpha-\delta}$ be $1 - \delta$ and $1 - (\alpha - \delta)$ CIs for $\Delta_\theta$ and $m_\theta$, respectively. Then, the PPI confidence interval $\calC_\alpha^\pp$ is defined as
\begin{align}
\calC_\alpha^\pp=\left\{\theta \mid 0\in \calR_\delta + \calT_{\alpha-\delta}\right\},
\label{eq:ppconfidenceinterval}
\end{align}
where $+$ denotes the Minkowski sum.
Typical choices for $\widehat m_\theta$ and $\calT_{\alpha-\delta}$ are the sample mean of the unlabelled data,
\begin{align}
\widehat m_\theta&=\frac{1}{N}\sum_{i=1}^{N} \mathcal{L}_{\theta}'(\widetilde X_i,f(\widetilde X_i)), \label{eq:msamplemean}
\end{align}
and classical CIs for sample means, respectively.
Different choices for $\widehat \Delta_\theta$ have been proposed in the literature, leading to different PPI estimators.

\paragraph{Standard PPI.}
\citet{Angelopoulos2023} propose to use the sample mean
\begin{align}
\widehat \Delta^\PP_\theta&=\frac{1}{n}\sum_{i=1}^n \Big(\mathcal{L}_{\theta}'(X_i,Y_i)-\mathcal{L}_{\theta}'(X_i,f(X_i))\Big)\label{eq:deltasamplemean}
\end{align}
as an estimator for $\Delta_\theta$ and the associated classical CIs to construct $\calR_\delta$.
For the squared loss, the estimator $\widehat\theta^\PP$ solving $\widehat m_\theta + \widehat \Delta_\theta=0$ takes the control variate form
\begin{align}
\widehat\theta^\PP= \overline Y - \left( \frac{1}{n}\sum_{i=1}^n f(X_i) - \frac{1}{N}\sum_{j=1}^{N} f(\tX_j)     \right) \label{eq:thetahatppi}
\end{align}
with control variate $f(X_i)-\frac{1}{N}\sum_{j=1}^{N} f(\tX_j)$ and $\lambda=1$.

\paragraph{PPI\texttt{++}.}
\citet{Angelopoulos2023a} extend standard PPI by introducing an additional control-variate parameter $\lambda$, which they call power tuning parameter.
The chosen $\widehat m_\theta$ is still the sample mean \eqref{eq:msamplemean},
while $\widehat \Delta^\PPpp_\theta$ now takes the control variate form
\begingroup
    \allowdisplaybreaks
    \begin{align}
        \widehat \Delta^\PPpp_\theta&=\frac{1}{n}\sum_{i=1}^n \left(\mathcal{L}_{\theta}'(X_i,Y_i)-\mathcal{L}_{\theta}'(X_i,f(X_i))\right) \label{eq:deltahatpowertuning} \\
        & ~ - (\widehat \lambda -1 )\left( \frac{1}{n}\left [\sum_{i=1}^n \mathcal{L}_{\theta}'(X_i,f(X_i))\right ]  -\widehat m_\theta\right), \nonumber
    \end{align}
\endgroup
where $\widehat\lambda$ is estimated from the data.
In this case, the centred control variate is $\mathcal{L}_{\theta}'(X_i,f(X_i))-\widehat m_\theta$, which depends only on the machine learning predictions. For the squared loss, we obtain
\begin{align}
    \widehat\theta^\PPpp = \overline Y - \widehat\lambda\left( \frac{1}{n}\sum_{i=1}^n f(X_i) - \frac{1}{N}\sum_{j=1}^{N} f(\tX_j)     \right)
    \label{eq:thetahatppipp}
    \end{align}
with plug-in estimator
\begin{align}
\widehat\lambda = \frac{c_n}{(1+\frac{n}{N})v_{n+N}}, \label{eq:lambdahat}
\end{align}
where $c_n$ is the sample covariance of $(Y_i,f(X_i))_{i=1}^n$ and $v_{n+N}$ is the sample variance of $((f(X_i))_{i=1}^n,(f(\tX_j))_{j=1}^N)$. The estimator \eqref{eq:thetahatppipp} is closely related (though slightly different) to the one introduced by \citet{Zhang2019} for mean estimation in semi-supervised inference.

\paragraph{CLT-based CIs.}
While the definition of the PPI confidence interval \eqref{eq:ppconfidenceinterval} allows for merging any CIs $\mathcal{R}_\delta$ and $\mathcal{T}_{\alpha-\delta}$ for $\Delta_\theta$ and $m_\theta$, in practice these are often chosen to be CLT-based CIs that, once combined into $\calC^\pp_\alpha$ give exact asymptotic coverage,
\begin{equation*}
       \liminf_{n,N \to \infty} \mathrm{Pr}(\theta^\star \in \calC^\pp_\alpha) \geq 1 - \alpha.
\end{equation*}
Such CLT-based CIs rely on the following standard assumption on the estimators $\widehat m_\theta$ and $\widehat \Delta_\theta$.
\begin{assumption}[CLT assumption for PPI and PPI\texttt{++}]
\label{assump:clt}
    Let  $\widehat m_\theta$ be the sample mean $\eqref{eq:msamplemean}$ and consider some estimator $(\widehat \sigma^f_\theta)^2$ of $\var(\widehat m_\theta)$, with $(\widehat \sigma^f_\theta)^2/\var(\widehat m_\theta)\to 1$ almost surely.
    Let $\widehat \Delta_\theta$ be either the PPI estimator \eqref{eq:deltasamplemean} or the PPI\texttt{++} estimator \eqref{eq:deltahatpowertuning} and consider some estimator $\widehat \sigma_\theta$ of $\var(\widehat \Delta_\theta)$ with $\widehat\sigma_\theta/\var(\widehat \Delta_\theta) \to 1$ a.s.  Assume that, as $\min(n,N)\to\infty$,
    \begin{align}
        (\widehat m_\theta - m_\theta) / \widehat \sigma^f_\theta & \to \Normal(0,1)\\
        (\widehat\Delta_\theta -\Delta_\theta) / \widehat\sigma_\theta &\to \Normal(0,1).
    \end{align}
\end{assumption}

\subsection{Bayes-Optimal Confidence Regions}\label{sec:background_fab}

The FAB framework \citep{Pratt1961,Pratt1963,Yu2018} aims to construct valid confidence regions with smaller expected volume.
Let $W\mid \beta\sim\Normal(\beta,\sigma^2)$ with some prior $\pi_0(\beta)$ and denote by $\pi(w)=\int p(w\mid \beta) \pi_0(\beta)d\beta$ the corresponding marginal likelihood.
For $\alpha\in(0,1)$, let $\calC_\alpha(w)$ be an exact $(1-\alpha)$ confidence region for $\beta$ based on the data $w$. That is,
for any fixed $\beta_0$,
\begin{align}
\Pr(\beta\in \calC_\alpha(W)\mid \beta=\beta_0)=1-\alpha.
\label{eq:confidencelevel}
\end{align}
Let $\vol(\calC_\alpha(w))=\int_{\beta'\in\calC_\alpha(w)}d\beta'$ be the volume of $\calC_\alpha(w)$, and consider its expected value under the marginal likelihood $\pi$,
\begin{equation}
    \E[\vol(\calC_\alpha(W))] = \int \vol(\calC_\alpha(w)) \pi(w) dw.
\end{equation}
\begin{definition}
    For $\alpha\in(0,1)$, $\sigma>0$ and a prior $\pi_0(\beta)$, the FAB confidence region $\calC_\alpha$ for the mean parameter $\beta$ of the normal model $Y\mid\beta\sim\Normal(\beta,\sigma^2)$, is the minimiser of the (Bayesian) expected volume
    \begin{align}
        \calC_\alpha = \arg\min_{\widetilde\calC_\alpha} \E[\vol(\widetilde\calC_\alpha(W))] \label{eq:bayes_optimal_C}
    \end{align}
    subject to the (frequentist) coverage constraint \eqref{eq:confidencelevel}. We write $\calC_\alpha(w)=\text{FAB-CR}(w;\pi_0,\sigma^2,\alpha)$.
\end{definition}
The solution to \cref{eq:bayes_optimal_C}, which exists and is unique if $\pi_0(\beta)$ is not degenerate \citep[Theorem~3.3]{Cortinovis2024}, may be found numerically as long as the marginal likelihood $\pi(w)$ can be evaluated pointwise.
Additional details are provided in \cref{sec:app:backgroundFAB}.
Intuitively, the FAB confidence region $\calC_\alpha(w)$ constructed through \cref{eq:bayes_optimal_C} will be smaller for values of $w$ that are likely under the marginal likelihood, and larger otherwise.
As a result of this, while FAB guarantees the right coverage for any prior, one that assigns high probability to the value of $\beta$ that generated the data is required to achieve smaller expected volume compared to the standard CI $(w\pm \sigma z_{1-\alpha/2})$, whose width does not depend on $w$.

\paragraph{Bayes-Assisted Estimator.} A natural estimator to use alongside the FAB confidence region $\calC_\alpha(w)$ is the posterior mean $\widehat{\beta}(W) = \mathbb{E}[\beta\mid W]$. As shown by \citep[Theorem~3.3]{Cortinovis2024}, it is always contained within the confidence region: $\widehat{\beta}(w)\in  \calC_\alpha(w)$ for any $w\in\bbR$ and any $\alpha\in(0,1)$. We refer to $\widehat{\beta}(W)$ as the Bayes-assisted estimator.

\section{FAB-PPI}
\label{sec:fab-ppi}
Our approach, which we call FAB-PPI, combines the PPI framework with the FAB construction of confidence regions by specifying a prior on the rectifier $\Delta_\theta$. To ease the presentation, here we describe the method for $Y, \theta \in\bbR$. The general multivariate case is discussed in \cref{app:multivariate}.

As in PPI, we use the sample mean \eqref{eq:msamplemean} as the estimator of $m_\theta$.
For $\Delta_\theta$, we start by considering a consistent estimator $\widehat \Delta_\theta$, such as the sample mean \eqref{eq:deltasamplemean} used in PPI, or the control variate estimator \eqref{eq:deltahatpowertuning} used in PPI\texttt{++}.
Throughout this section, we assume that \cref{assump:clt} is satisfied. That is, a CLT holds for $\widehat{m}_\theta$ and $\widehat{\Delta}_\theta$ with respect to some estimators $(\widehat{\sigma}^f_\theta)^2$ and $\widehat{\sigma}^2_\theta$ of $\var(\widehat{m}_\theta)$ and $\var(\widehat{\Delta}_\theta)$, respectively.
In this setting, let $\pi_0(\Delta_\theta; \tau_n)$ be a prior on $\Delta_\theta$ with scale parameter $\tau_n$, which may depend on the labelled data through $\widehat \sigma_\theta$.
Denote by $\ell(w; \sigma, \tau)$ the log-marginal likelihood, evaluated at $w$, of a Gaussian likelihood model with mean $\Delta$ and variance $\sigma^2$ under the prior $\pi_0(\Delta; \tau)$,
\begin{align}
    \ell(w; \sigma, \tau) = \log \int_\bbR \Normal(w; \Delta,\sigma^2) \pi_0(\Delta; \tau) d\Delta. \nonumber
\end{align}

\subsection{Bayes-Assisted PPI Estimators}
 Consider the Bayes-assisted estimator
\begin{align}
    \widehat\Delta_\theta^{\FABPPI}= \widehat\Delta_\theta + \widehat\sigma_\theta^2 \ell'\left(\widehat\Delta_\theta; \widehat\sigma_\theta, \tau_n\right)
\label{eq:FABPPIestimatorDelta}
\end{align}
for the rectifier $\Delta_\theta$.
By Tweedie's formula \citep{Efron2011}, the above estimator is the posterior mean of the mean parameter of a Gaussian likelihood model under the prior $\pi_0$.
Note however that we do not assume here that $\widehat\Delta_\theta$ is normally distributed for a fixed $n$.

The FAB-PPI estimator of $\theta^\star$, denoted by $\widehat \theta^{\FABPPI}$, is then obtained as the solution, in $\theta$, to the equation
$$
    \widehat m_\theta + \widehat\Delta_\theta^{\FABPPI} = 0.
$$

\subsection{FAB-PPI Confidence Regions}\label{sec:fabppi_cr}
As in PPI, let $\calT_{\alpha-\delta}(\widehat m_\theta)$ denote a standard $1-(\alpha-\delta)$ confidence interval for $m_\theta$.
For $\Delta_\theta$, we apply the FAB framework with the prior $\pi_0$ to obtain a $1-\delta$ confidence region $\calR^{\FABPPI}_\delta(\widehat\Delta_\theta)=\text{FAB-CR}(\widehat\Delta_\theta;\pi_0(\cdot~;\tau_n),\widehat\sigma_\theta,\delta)$.
Then, the FAB-PPI confidence region $\calC_\alpha^\FABPPI$ is obtained as
\begin{align}
    \calC_\alpha^\FABPPI=\left\{\theta \mid 0\in \calR^{\FABPPI}_\delta(\widehat\Delta_\theta) + \calT_{\alpha-\delta}(\widehat m_\theta)\right\}.
    \label{eq:fabppiconfidenceinterval}
\end{align}

\cref{algo:FABPPIConvex} summarises the steps of the FAB-PPI approach in a general convex estimation problem.
\begin{algorithm}
    \caption{FAB-PPI for convex estimation}
    \label{algo:FABPPIConvex}
    {\bfseries Input:} labelled $\{(X_i,Y_i)\}_{i=1}^n$, unlabelled $\{\tX_j\}_{j=1}^N$, predictor $f$, prior $\pi_0(\cdot~; \tau_n)$, error levels $\alpha,\delta$
    \begin{algorithmic}
        \STATE Set $\widehat\lambda=1$ (FAB-PPI) or estimate $\widehat\lambda$ from data (FAB-PPI\texttt{++}) as in \citet{Angelopoulos2023a}.
        \FOR{$\theta \in \Theta_\text{grid}$}
            \STATE $\widehat m_\theta \gets \frac{1}{N}\sum_{i=1}^{N} \mathcal{L}_{\theta}'(\widetilde X_i,f(\widetilde X_i))$
            \STATE $\widehat{\xi} \gets \frac{1}{n}\sum_{i=1}^n \left(\mathcal{L}_{\theta}'(X_i,Y_i)- \widehat{\lambda} \mathcal{L}_{\theta}'(X_i,f(X_i))\right)$
            \STATE $\widehat \Delta_\theta \gets \widehat{\xi} + (\widehat \lambda -1 ) \widehat m_\theta$
            \STATE $\widehat{\sigma}_m^2 \gets \frac{1}{N - 1} \sum_{i=1}^{N} \left(\mathcal{L}_{\theta}'(\widetilde X_i,f(\widetilde X_i)) - \widehat m_\theta\right)^2$
            \STATE $\widehat \sigma_\xi^2 \gets \frac{1}{n-1} \sum_{i=1}^n \left(\mathcal{L}_{\theta}'(X_i,Y_i)- \widehat{\lambda} \mathcal{L}_{\theta}'(X_i,f(X_i)) - \widehat{\xi}\right)^2$
            \STATE $\widehat{\sigma}^2_\theta \gets \frac{1}{n} \widehat \sigma_\xi^2 + \frac{(\widehat \lambda -1 )^2}{N} \widehat{\sigma}^2_m$
            \STATE $\mathcal{T}_{\alpha - \delta}(\widehat m_\theta) \gets \left(\widehat{m}_\theta \pm \frac{\widehat{\sigma}_m}{\sqrt{N}} z_{1-(\alpha - \delta)/2}\right)$
            \STATE $\calR^{\FABPPI}_\delta(\widehat\Delta_\theta)\gets \text{FAB-CR}(\widehat\Delta_\theta;\pi_0(\cdot~;\tau_n),\widehat\sigma_\theta,\delta)$
            \STATE $\widehat\Delta_\theta^{\FABPPI} \gets \widehat\Delta_\theta + \widehat\sigma_\theta^2 \ell'\left(\widehat\Delta_\theta;\widehat\sigma_\theta, \tau_n\right)$
        \ENDFOR
    \end{algorithmic}
    {\bfseries Outputs:} estimator $\widehat \theta^{\FABPPI} = \arg\min_{\Theta_\text{grid}} \left|\widehat{m}_\theta + \widehat\Delta_\theta^{\FABPPI}\right|$ and CR $\calC_\alpha^\FABPPI=\left\{\theta \mid 0\in \calR^{\FABPPI}_\delta(\widehat\Delta_\theta) + \calT_{\alpha-\delta}(\widehat m_\theta)\right\}$
\end{algorithm}

\subsection{Choosing the Prior}\label{sec:fabppi_priors}
FAB-PPI is motivated by applications in which the PPI predictor $f$ is expected to be generally accurate, as measured by the rectifier $\Delta_\theta$.
Such a property may be encoded in $\pi_0(\Delta_\theta; \tau_n)$ by choosing a prior that concentrates around zero.
As mentioned in \cref{sec:background_fab}, the FAB construction of $\calR^{\FABPPI}_\delta(\widehat\Delta_\theta)$ will exhibit smaller volume compared to the classical CI, and hence result in downstream efficiency gains over standard PPI, if the true rectifier $\Delta_\theta$ is likely under $\pi_0$.
In particular, the prior scale $\tau_n$ controls the size of the potential efficiency gains and losses of FAB-PPI over PPI: the smaller $\tau_n$, the more the resulting CR will shrink (resp.~grow) when $\Delta_\theta \simeq 0$ (resp. $|\Delta_\theta| \gg 0$). Experimentally, we find that the choice $\tau_n = \widehat\sigma_\theta$ results in a parameter-free approach that strikes a good compromise. More general choices of $\tau_n$ are briefly mentioned in \cref{sec:discussion}.

A seemingly natural proposal for $\pi_0$ that meets the requirements above is the Gaussian prior
\begin{align}
    \pi_\text{N}(\Delta_\theta; \widehat\sigma_\theta) = \Normal(\Delta_\theta; 0, \widehat\sigma_\theta).
    \label{eq:Gaussianprior}
\end{align}
However, as we will discuss in \cref{sec:fabppi_theory}, $\pi_\text{N}$ exhibits undesirable properties for FAB-PPI. Instead, we propose to use the horseshoe prior~\citep{Carvalho2010}
\begin{align}
    \pi_\text{HS}(\Delta_\theta;\widehat\sigma_\theta)=\int_0^\infty \Normal(\Delta_\theta;0,\nu^2\widehat\sigma^2_\theta) C^+(\nu; 0,1)d\nu,
    \label{eq:horseshoeprior}
\end{align}
where $C^+(\nu; 0,1)$ denotes the pdf of the half-Cauchy distribution with location parameter 0 and scale parameter 1. In the case of $\pi_\text{HS}$, the choice of scaling $\tau_n = \widehat\sigma_\theta$ is further motivated by \citet[\S3.3]{Piironen2017}. Furthermore,  the horseshoe prior has power-law tails, making it a particularly robust choice for FAB-PPI, as discussed in \cref{sec:fabppi_theory}. Crucially, for both priors $\pi_\text{N}$ and $\pi_\text{HS}$, the marginal likelihood under a Gaussian model with standard deviation $\widehat\sigma_\theta$ can be expressed in terms of standard functions (see \cref{sec:app:backgroundhorseshoe} for the horseshoe), enabling us to compute $\widehat\Delta_\theta^{\FABPPI}$ and $\calR^{\FABPPI}_\delta(\widehat\Delta_\theta)$ in \cref{algo:FABPPIConvex}.

\subsection{Theoretical Properties}\label{sec:fabppi_theory}

As shown by the following result, proved in \cref{app:thm:fabppi_coverage}, the FAB-PPI CR has exact asymptotic coverage.
\begin{theorem}[Asymptotic coverage]\label{thm:fabppi_coverage}
    For $\alpha \in (0, 1)$, let $\calC_\alpha^\FABPPI$ be the FAB-PPI confidence region \eqref{eq:fabppiconfidenceinterval} under the Gaussian prior \eqref{eq:Gaussianprior} or the horseshoe prior \eqref{eq:horseshoeprior}. Then, under \cref{assump:clt},
    \begin{equation*}
        \liminf_{\min(n,N) \to \infty} \Pr(\theta^\star \in \calC_\alpha^\FABPPI) \geq 1 - \alpha.
    \end{equation*}
\end{theorem}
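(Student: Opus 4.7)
My plan is to decompose the coverage event via the Minkowski-sum structure of $\calC_\alpha^\FABPPI$, apply a union bound, and then verify each of the two resulting asymptotic statements separately. Since $m_{\theta^\star} + \Delta_{\theta^\star} = 0$ by \cref{eq:defthetastar2}, the joint event $\{m_{\theta^\star} \in \calT_{\alpha-\delta}(\widehat m_{\theta^\star})\} \cap \{\Delta_{\theta^\star} \in \calR^{\FABPPI}_\delta(\widehat\Delta_{\theta^\star})\}$ entails $0 \in \calT_{\alpha-\delta} + \calR^{\FABPPI}_\delta$, i.e., $\theta^\star \in \calC_\alpha^\FABPPI$. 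The union bound then gives
\[
\Pr(\theta^\star \in \calC_\alpha^\FABPPI) \geq 1 - \Pr(m_{\theta^\star} \notin \calT_{\alpha-\delta}) - \Pr(\Delta_{\theta^\star} \notin \calR^{\FABPPI}_\delta),
\]
so it suffices to show $\liminf \Pr(m_{\theta^\star} \in \calT_{\alpha-\delta}) \geq 1-(\alpha-\delta)$ and $\liminf \Pr(\Delta_{\theta^\star} \in \calR^{\FABPPI}_\delta) \geq 1-\delta$. The first is routine: it follows from \cref{assump:clt} via Slutsky applied to the Wald-type CI $\calT_{\alpha-\delta}$.

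The substantive step is the second bound. By \cref{sec:background_fab}, $\calR^{\FABPPI}_\delta(\widehat\Delta_\theta)$ has \emph{exact} coverage $1-\delta$ under the hypothetical Gaussian model $\widehat\Delta_\theta \mid \Delta_\theta \sim \Normal(\Delta_\theta, \widehat\sigma_\theta^2)$. To transfer this exactness to the actual asymptotic regime, I would exploit the scale-equivariance of both the Gaussian and horseshoe priors under the choice $\tau_n = \widehat\sigma_\theta$: setting $\beta_n = \Delta_{\theta^\star}/\widehat\sigma_{\theta^\star}$ and $\tilde Z_n = (\widehat\Delta_{\theta^\star} - \Delta_{\theta^\star})/\widehat\sigma_{\theta^\star}$, the coverage event rewrites as $\beta_n \in \tilde\calR_\delta(\beta_n + \tilde Z_n)$, where $\tilde\calR_\delta$ is the FAB CR for the unit-scale model ($\sigma = \tau = 1$). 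The exact-coverage identity then reads $\Pr_{Z \sim \Normal(0,1)}(\beta \in \tilde\calR_\delta(\beta+Z)) = 1-\delta$ for every $\beta \in \bbR$, and by \cref{assump:clt} combined with $\widehat\sigma_{\theta^\star}^2/\var(\widehat\Delta_{\theta^\star}) \asto 1$ we have $\tilde Z_n \to \Normal(0,1)$ in distribution. It remains to check that substituting the random sequence $\beta_n$ and the approximately-normal $\tilde Z_n$ preserves the $1-\delta$ limit.

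This last step is the main obstacle, because $\beta_n$ is random and, when $\Delta_{\theta^\star} \neq 0$, diverges almost surely as $\widehat\sigma_{\theta^\star} \to 0$. I would split into cases. If $\Delta_{\theta^\star} = 0$ then $\beta_n \equiv 0$ and Portmanteau applied to the indicator of $\{0 \in \tilde\calR_\delta(\tilde Z_n)\}$, whose boundary is Lebesgue-null, yields the limit $1-\delta$. If $\Delta_{\theta^\star} \neq 0$ and one uses the horseshoe prior, I would invoke the reversion result of \citet{Cortinovis2024}, which states that $\tilde\calR_\delta(w) \to w + [-z_{1-\delta/2},\, z_{1-\delta/2}]$ as $|w| \to \infty$; the event then reduces asymptotically to the classical $\{|\tilde Z_n| \leq z_{1-\delta/2}\}$, whose limiting probability is $1-\delta$. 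The Gaussian-prior case is more delicate because the FAB CR does not revert, and the argument would instead rely on a uniform-in-$\beta$ normal-approximation (e.g.\ a Berry--Esseen-type bound on the law of $\tilde Z_n$) combined with the exactness identity to pass to the limit along the diverging sequence $\beta_n$.
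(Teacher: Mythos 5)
Your overall architecture is exactly the paper's: reduce coverage of $\calC_\alpha^\FABPPI$ to separate coverage of $\calT_{\alpha-\delta}$ and $\calR^{\FABPPI}_\delta$ via the identity $m_{\theta^\star}+\Delta_{\theta^\star}=0$ and Boole's inequality, handle $\calT$ by the CLT, rescale the FAB region to unit scale, and split on $\Delta_{\theta^\star}=0$ versus $\Delta_{\theta^\star}\neq 0$. The $\Delta_{\theta^\star}=0$ branch (where $w_\delta(0)=1/2$ and the FAB region is the classical $z$-interval) and the horseshoe branch (reversion of the FAB CR to the classical interval as the observed value diverges, reducing the event to $\{|\tilde Z_n|\le z_{1-\delta/2}\}$) are both the paper's argument in substance; the only thing you elide there is that the reversion must hold uniformly enough to be evaluated along the random diverging sequence $\widehat\Delta_{\theta^\star}/\widehat\sigma_{\theta^\star}$, which the paper establishes by proving uniform convergence of the spending function on a half-line $[M,\infty)$ and then Hausdorff convergence.

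The genuine gap is your Gaussian-prior, $\Delta_{\theta^\star}\neq 0$ branch. A Berry--Esseen-type uniform normal approximation is not available: \cref{assump:clt} gives only convergence in distribution, with no rate. More importantly, the route ``exactness identity plus uniform normal approximation'' does not address the two actual obstacles: (i) $\beta_n=\Delta_{\theta^\star}/\widehat\sigma_{\theta^\star}$ and $\tilde Z_n$ are built from the same data, so you cannot condition on $\beta_n$ and invoke the fixed-$\beta$ exactness identity; and (ii) the shape of the acceptance region must be controlled along $\beta_n\to\infty$. The paper closes this branch without any rate assumption by using the explicit spending function of the Gaussian prior (\cref{prop:fabgaussianprop}): since $w_\delta(\beta_n;1)\to 1$ almost surely, the rescaled acceptance interval $[-z_{1-\delta w_\delta},\,z_{1-\delta(1-w_\delta)}]$ converges to the deterministic half-line $[-z_{1-\delta},\infty)$, so the coverage event asymptotically contains $\{\tilde Z_n\ge -(z_{1-\delta}-\epsilon)\}$ and a single application of the CLT gives $\liminf\Pr \ge 1-\delta$. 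Note in particular that the Gaussian FAB region does \emph{not} revert to the classical two-sided interval --- it becomes asymptotically one-sided (cf.~\cref{rem:fabppi_asymptotic}, where its volume does not vanish) --- so any argument that implicitly treats it as a perturbation of the classical $z$-interval would fail; you need the explicit form of $w_\delta$ to see why coverage is nevertheless preserved.
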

The proof of \cref{thm:fabppi_coverage} crucially relies on showing exact asymptotic coverage of the FAB CR $\calR^{\FABPPI}_\delta(\widehat\Delta_\theta)$. While the latter holds for both priors introduced in the previous sections, the two limits behave very differently. In particular, as discussed in \cref{rem:fabppi_asymptotic}, the volume of $\calR^{\FABPPI}_\delta(\widehat\Delta_\theta)$ vanishes asymptotically under $\pi_\text{HS}$, while it does not under $\pi_\text{N}$. \looseness=-1

The behaviour of $\calR^{\FABPPI}_\delta(\widehat\Delta_\theta)$ under the two priors also differs for large values of observed $\widehat\Delta_\theta$.
In case of increasing disagreement between the prior and the data, Gaussian FAB confidence regions are known to become arbitrarily large \citep{Yu2018}.
On the other hand, thanks to its power-law tails, the horseshoe results in confidence regions that revert to the corresponding standard CI \citep{Cortinovis2024}.
Here, we state the implication of this property on FAB-PPI informally, and provide a formal proof in \cref{app:thm:fabppi_robustness}.
\begin{proposition}[Robustness under the horseshoe, informal]\label{prop:fabppi_robustness}
    For $\alpha\in(0,1)$, let $\calC_\alpha^\FABPPI$ and $\calC_\alpha^\PP$ denote, respectively, the FAB-PPI confidence region \eqref{eq:fabppiconfidenceinterval} under the horseshoe prior \eqref{eq:horseshoeprior} and the standard CLT-based PPI CI for $\theta$, both viewed as functions of $\widehat\Delta_\theta$. If $|\widehat\Delta_\theta| \gg 0$, then
    \begin{align*}
        \calC_\alpha^\FABPPI \simeq \calC_\alpha^\PP.
    \end{align*}
\end{proposition}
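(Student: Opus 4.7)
The plan is to reduce Proposition~\ref{prop:fabppi_robustness} to the corresponding robustness property of the FAB confidence region for the rectifier alone. The Minkowski-sum definitions in \eqref{eq:ppconfidenceinterval} and \eqref{eq:fabppiconfidenceinterval} differ only through their rectifier components, since the CLT-based interval $\calT_{\alpha-\delta}(\widehat m_\theta)$ for $m_\theta$ is identical in the two constructions and $\widehat m_\theta$ does not depend on $\widehat\Delta_\theta$. It therefore suffices to show that, as $|\widehat\Delta_\theta|\to\infty$ with $\widehat\sigma_\theta$ and $\tau_n$ held fixed,
\begin{equation*}
\dHaus\bigl(\calR^{\FABPPI}_\delta(\widehat\Delta_\theta),\ \widehat\Delta_\theta \pm \widehat\sigma_\theta\, z_{1-\delta/2}\bigr) \longrightarrow 0,
\end{equation*}
where the right-hand side is the classical CLT-based interval used as $\calR_\delta$ in standard PPI.

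Next, I would invoke the horseshoe robustness result of \citet{Cortinovis2024}. For any fixed $\sigma>0$, $\tau>0$ and $\delta\in(0,1)$, their analysis shows that $\text{FAB-CR}(w;\pi_\text{HS}(\cdot;\tau),\sigma,\delta)$ converges in Hausdorff distance to the symmetric interval $w\pm \sigma z_{1-\delta/2}$ as $|w|\to\infty$. The mechanism is that the power-law tails of the horseshoe force the log-marginal-likelihood derivative $\ell'(w;\sigma,\tau)$ to vanish as $|w|\to\infty$, so that both the Bayes-assisted shift appearing in \eqref{eq:FABPPIestimatorDelta} and the asymmetric reweighting that defines the FAB region become negligible relative to the classical width $2\sigma z_{1-\delta/2}$. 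Applying this with $w=\widehat\Delta_\theta$, $\sigma=\widehat\sigma_\theta$ and $\tau=\tau_n$ yields the displayed convergence. To lift this to the full CR, I would use that the Minkowski sum of bounded real intervals is continuous in Hausdorff distance in each argument, so that the set on the right-hand side of \eqref{eq:fabppiconfidenceinterval} converges in Hausdorff distance to $\calR_\delta + \calT_{\alpha-\delta}(\widehat m_\theta)$; the defining condition $\{\theta\mid 0\in\,\cdot\,\}$ then translates the convergence to $\dHaus(\calC_\alpha^\FABPPI,\calC_\alpha^\PP)\to 0$.

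The main obstacle is the first step: making the informal condition $|\widehat\Delta_\theta|\gg 0$ precise while correctly handling the data-dependent scale $\tau_n=\widehat\sigma_\theta$. A clean route is to state the conclusion conditionally on $\widehat\sigma_\theta$, which under \cref{assump:clt} is almost surely bounded away from zero and infinity for large $n$, so that the Cortinovis2024 limit applies at the conditioned value of $\tau_n$. An alternative is to verify that the horseshoe robustness result holds uniformly over a compact range of scales, which should follow by inspection of their proof since the horseshoe marginal likelihood depends smoothly and monotonically on $\tau$. Either route yields the formal version of Proposition~\ref{prop:fabppi_robustness} that presumably appears in \cref{app:thm:fabppi_robustness}.
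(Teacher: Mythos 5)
Your proposal follows essentially the same route as the paper's proof in \cref{app:thm:fabppi_robustness}: it invokes the Hausdorff convergence of the horseshoe FAB region to the classical $z$-interval (\citet[Theorem~3.4]{Cortinovis2024}, restated as \cref{prop:limitstandardCI}), passes this through the Minkowski sum with the fixed CLT-based interval for $m_\theta$, and then reads off the resulting confidence region for $\theta$. The only minor point worth noting is that the paper, when translating back to the set $\{\theta \mid 0 \in \cdot\}$, phrases the hypothesis as $\inf_{\theta'}|\widehat\Delta_{\theta'}| \gg 0$ since the condition must hold at every $\theta$ being tested, whereas your write-up states it pointwise; this does not change the substance of the argument for this informal statement.
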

In practice, this means that, in the presence of heavily biased predictors, FAB-PPI with the horseshoe prior reverts to standard PPI. In a sense, this represents a form of robustness to prior misspecification of FAB-PPI under the horseshoe. \looseness=-1

Overall, \cref{rem:fabppi_asymptotic} and \cref{prop:fabppi_robustness} provide strong support for preferring $\pi_\text{HS}$ over $\pi_\text{N}$ within the FAB-PPI framework.

\subsection{FAB-PPI for Mean Estimation}\label{sec:fabppi_mean_estimation}
To provide a concrete example, a specialised version of \cref{algo:FABPPIConvex} under the squared loss is derived in \cref{app:fabppi_squared_loss}.
Here, we briefly discuss the differences between the FAB-PPI mean estimator and its standard PPI counterpart, as well as the asymptotic behaviour of the former.
Under the squared loss, the rectifier $\Delta := \Delta_\theta$ does not depend on $\theta$ and the FAB-PPI estimator $\widehat \theta^{\FABPPI}$ corresponding to the chosen estimator $\widehat\Delta$ (PPI or PPI\texttt{++}) is given by \looseness=-1
\begin{align}
    \widehat \theta^{\FABPPI} &=  \widehat \theta - \widehat\sigma^2 \ell'(\widehat\Delta; \widehat\sigma, \tau_n) \label{eq:mean_estimator_fabppi}\\
    &=\overline Y - \widehat\lambda\left( \frac{1}{n}\sum_{i=1}^n f(X_i) - \frac{1}{N}\sum_{j=1}^{N} f(\tX_j) \right) \nonumber \\
    &~~~~- \widehat\sigma^2 \ell'\left(\widehat\Delta; \widehat\sigma, \tau_n\right),\nonumber
\end{align}
where $\widehat\sigma^2$ is an estimator of $\var(\widehat\Delta)$, $\widehat \theta$ is the PPI estimator corresponding to $\widehat\Delta$, and $\widehat\lambda$ is set either to one (PPI) or \eqref{eq:lambdahat} (PPI\texttt{++}).
In both cases, the estimator $\widehat \theta^{\FABPPI}$ takes the form
$$
    \text{\normalsize Classic Estimator + PPI correction + \textbf{Bayes correction}},
$$
where the last component depends on the chosen prior. The following proposition, proved in \cref{app:prop:fabppi_consistency}, further differentiates between the priors presented in \cref{sec:fabppi_priors} in favour of the horseshoe.
\begin{proposition}[Consistency of FAB-PPI mean estimators]\label{prop:fabppi_consistency}
    Let $\widehat \theta^{\FABPPI}_\text{HS}$ and $\widehat \theta^{\FABPPI}_\text{N}$ be the FAB-PPI estimators \eqref{eq:mean_estimator_fabppi} under the horseshoe \eqref{eq:horseshoeprior} and Gaussian \eqref{eq:Gaussianprior} priors, respectively. If the PPI estimator $\widehat \theta$ is a consistent estimator of $\theta^\star$, then $\widehat \theta^{\FABPPI}_\text{HS}$ is a consistent estimator of $\theta^\star$, while $\widehat \theta^{\FABPPI}_\text{N}$ is not.
\end{proposition}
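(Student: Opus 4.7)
The plan is to exploit the explicit decomposition \eqref{eq:mean_estimator_fabppi}, $\widehat\theta^{\FABPPI} = \widehat\theta - \widehat\sigma^2 \ell'(\widehat\Delta;\widehat\sigma,\tau_n)$, with the choice $\tau_n = \widehat\sigma$ advocated in \cref{sec:fabppi_priors}. Since $\widehat\theta \to \theta^\star$ in probability by hypothesis, and \cref{assump:clt} together with standard LLN arguments yield $\widehat\sigma \to 0$ and $\widehat\Delta \to \Delta$ in probability as $\min(n,N)\to\infty$, consistency of $\widehat\theta^{\FABPPI}$ reduces to analysing the limit of the Bayes correction $\widehat\sigma^2\ell'(\widehat\Delta;\widehat\sigma,\widehat\sigma)$ under each prior.

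The key reduction is a scale-invariance observation. Both $\pi_\text{N}$ and $\pi_\text{HS}$ are scale families in $\tau$, so a change of variables in the definition of $\ell$ shows that, when $\tau=\sigma$, the marginal density takes the form $p(w;\sigma,\sigma) = \sigma^{-1} h(w/\sigma)$ for a density $h$ that does not depend on $\sigma$. Differentiating yields
\[
\widehat\sigma^2\ell'(\widehat\Delta;\widehat\sigma,\widehat\sigma) \;=\; \widehat\sigma\,\psi\!\left(\widehat\Delta/\widehat\sigma\right), \qquad \psi := (\log h)',
\]
so the question becomes whether or not $\widehat\sigma\,\psi(\widehat\Delta/\widehat\sigma)\to 0$.

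For $\pi_\text{N}$, the scale-unit marginal $h$ is the density of $\Normal(0,2)$, hence $\psi_\text{N}(z) = -z/2$ and the correction simplifies exactly to $-\widehat\Delta/2$, which converges to $-\Delta/2$ in probability. Therefore $\widehat\theta^{\FABPPI}_\text{N} \to \theta^\star + \Delta/2$, which differs from $\theta^\star$ for any data-generating process with a biased predictor ($\Delta\neq 0$); a single such example suffices to witness inconsistency. For $\pi_\text{HS}$, Tweedie's formula combined with the scale-mixture representation \eqref{eq:horseshoeprior} at unit scale gives $\psi_\text{HS}(z) = -z\,\E[\kappa\mid z]$, where $\kappa = 1/(1+\nu^2)$ is the horseshoe local shrinkage factor. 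Because $\kappa\in[0,1]$, the map $z\mapsto z\,\E[\kappa\mid z]$ is continuous and vanishes at $z=0$; the polynomial tails of the horseshoe give $\E[\kappa\mid z] = O(z^{-2})$ as $|z|\to\infty$ \citep{Carvalho2010,Cortinovis2024}, so it also tends to $0$ at infinity. These two facts together imply that $\psi_\text{HS}$ is bounded on $\bbR$, whence $|\widehat\sigma^2\ell'(\widehat\Delta;\widehat\sigma,\widehat\sigma)|\le \widehat\sigma\,\|\psi_\text{HS}\|_\infty\to 0$ and $\widehat\theta^{\FABPPI}_\text{HS}\to\theta^\star$.

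The main obstacle is the tail bound $\E[\kappa\mid z] = O(z^{-2})$, which is what prevents $z\,\E[\kappa\mid z]$ from diverging and thus underpins the boundedness of $\psi_\text{HS}$. This estimate is not apparent from the integral definition of $\pi_\text{HS}$ and would be invoked from the asymptotic analysis of the horseshoe marginal in \citet{Carvalho2010} rather than rederived. All remaining steps—the scale-invariance computation, the closed-form $\psi_\text{N}$, and the passage to the limit via Slutsky-type arguments—are routine.
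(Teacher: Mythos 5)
Your proposal is correct and follows essentially the same route as the paper's proof: the same decomposition into PPI estimator plus Bayes correction, the same scale-invariance reduction $\widehat\sigma^2\ell'(\widehat\Delta;\widehat\sigma,\widehat\sigma)=\widehat\sigma\,\ell'(\widehat\Delta/\widehat\sigma;1,1)$, boundedness of the unit-scale horseshoe score (which the paper obtains from the asymptotics $\kappa(y)\sim 2\sigma^2/y^2$ of the shrinkage function in its background section, matching your $\E[\kappa\mid z]=O(z^{-2})$ argument), and the exact identity $\ell_{\mathrm{N}}'(y;\sigma,\sigma)=-y/(2\sigma^2)$ giving the limit $\theta^\star+\Delta/2$ for the Gaussian prior. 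No gaps.
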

Intuitively, this is due to the fact that the influence of $\pi_\text{HS}$ vanishes asymptotically, while for $\pi_\text{N}$ it does not.

\section{Experiments}
\label{sec:experiments}
We compare FAB-PPI and power-tuned FAB-PPI (FAB-PPI\texttt{++}) to classical inference, PPI and power-tuned PPI (PPI\texttt{++}) on both synthetic and real estimation problems.
For FAB-PPI, we use $\text{(HS)}$ and $\text{(N)}$ to indicate the use of the horseshoe and Gaussian priors defined in \cref{sec:fabppi_priors}.
As already mentioned, PPI is motivated by settings in which labelled data are scarce, while unlabelled data are abundant. Moreover, the application of FAB to PPI specifically targets the estimation of the rectifier $\Delta_\theta$.
For these reasons, we choose to focus on cases where $N \gg n$ is large enough to rule out any uncertainty in the measure of fit $m_\theta$, which we estimate using the sample mean $\widehat{m}_\theta$ \eqref{eq:msamplemean}. As a result of this, given a $1 - \delta$ confidence interval (FAB or not) $\mathcal{R}_\delta$ for $\Delta_\theta$, the corresponding $1 - \alpha$ CI for $\theta^\star$ is obtained simply by setting $\delta = \alpha$ and shifting $\mathcal{R}_\delta$ by $\widehat{m}_\theta$. This simplification allows us to evaluate the direct effect of FAB on the procedure, eliminating concerns about the loss of tightness in the CI on $\theta^\star$ due to the Minkowski sum in \cref{eq:fabppiconfidenceinterval}. In all experiments, we check empirically that $N$ is large enough to make this assumption by monitoring the coverage of the resulting intervals against both the nominal level $1 - \alpha$ and the coverage of PPI intervals that also consider the uncertainty in $m_\theta$ (denoted with PPI (full) and PPI\texttt{++} (full) in \cref{app:additional_results}). \looseness=-1

\subsection{Synthetic Data}\label{sec:synthetic_data}
The simulated experiments below have a common structure. We sample two datasets, $n$ labelled observations $\{(X_i,Y_i)\}_{i=1}^n$ iid from $\mathbb{P}$ and $N$ unlabelled observations $\{\widetilde{X}_i\}_{i=1}^N$ iid from $\mathbb{P}_X$. We use a prediction rule $f$ to obtain predictions $\{f(X_i)\}_{i=1}^n$ and $\{f(\widetilde{X}_i)\}_{i=1}^N$. We apply the different procedures to obtain estimates and $1 - \alpha$ confidence regions for the mean $\theta^\star = \mathbb{E}[Y]$. For all experiments, we set $\alpha = 0.1$ and report the average mean squared error (MSE), interval volume, and coverage over $1000$ repetitions.

\paragraph{Biased Predictions.}
We sample $X_i \overset{iid}{\sim} \mathcal{N}(0,1)$ and $Y_i = X_i + \epsilon_i$ with $\epsilon_i \overset{iid}{\sim} \mathcal{N}(0, 1)$, so that $\theta^\star = \mathbb{E}[Y] = 0$.
The prediction rule is defined as $f(X_i) = X_i + \gamma$, where $\gamma \in \mathbb{R}$.
For this choice, the bias of $f$ is controlled by $\gamma$, since $\text{MSE}(f) = \gamma^2 + 1$.
For this experiment, we assume that $N$ is infinite, set $n = 200$, and vary $\gamma$ between $-1.5$ and $1.5$.
\cref{fig:simul_bias_width_main} shows the average interval volume as a function of $\gamma$ for classical inference, PPI\texttt{++}, and FAB-PPI\texttt{++} with both a horseshoe and a Gaussian prior.
\begin{figure}[ht!]
    \centering
    \includegraphics[width=0.4\textwidth]{./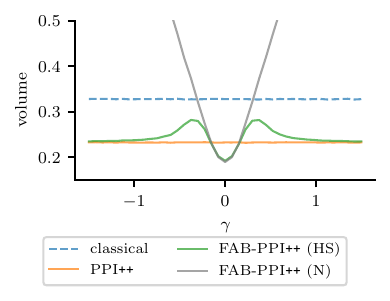}
    \caption{Biased predictions study. The panel shows the average CI volume as the bias level $\gamma$ varies.}
    \label{fig:simul_bias_width_main}
\end{figure}
Results for the non power-tuned methods, as well as MSE and coverage plots, are reported in \cref{fig:simul_bias_supplementary}.
Except for the version with the Gaussian prior, all the PPI procedures outperform classical inference for every bias level $\gamma$, but the behaviour exhibited by PPI\texttt{++} deserves attention, as its CI volume is approximately constant across values of $\gamma$.
This is due to the fact that, since $N$ is taken to be infinite and $n$ is fairly large, $\widehat\lambda \simeq \cov(Y, f(X)) = 1$ and the rectifier is accurately estimated with similar variance across all values of $\gamma$.
On the other hand, the CI volume for the FAB-PPI methods varies greatly with $\gamma$. When the bias is small ($\gamma \simeq 0$), the observed rectifier has a value close to $0$, leading to smaller CIs.
As the bias increases, the volume of the confidence intervals grows, until it surpasses that of the PPI intervals.
At this point, the two FAB-PPI procedures behave differently: the volume of the Gaussian intervals grows without bound, whereas the horseshoe intervals eventually revert to the PPI ones.
This example clearly shows that FAB-PPI with a horseshoe prior allows to obtain smaller CIs when the predictions are good, while ensuring robustness as the quality of the predictions decreases (\cref{prop:fabppi_robustness}). \looseness=-1

\paragraph{Noisy Predictions.}
We consider the mean estimation example of \citet[\S7.1.1]{Angelopoulos2023a}, which does not involve any covariate $X$.
We sample $Y_i \overset{iid}{\sim} \mathcal{N}(0, 1)$, so that $\theta^\star = \mathbb{E}[Y] = 0$.
The prediction rule is defined as $f(X_i) = Y_i + \sigma_Y \epsilon_i$, where $\epsilon_i \overset{iid}{\sim} \mathcal{N}(0,1)$ and $\sigma_Y$ is successively set to $0.1$, $1$, and $2$.
For this experiment, we set $N = 10^6$ and vary $n$ from $100$ to $1000$.
\cref{fig:simul_noisy_width_main} shows the average interval volume as a function of $n$ for the different methods as the noise level $\sigma_Y$ varies, while similar plots for the MSE and coverage are reported in \cref{fig:simul_noisy_supplementary}.
\begin{figure}[ht!]
    \centering
    \includegraphics[width=0.45\textwidth]{./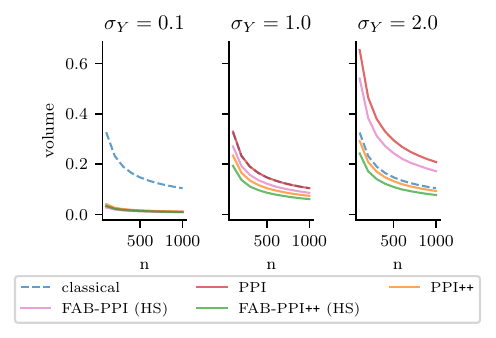}
    \caption{Noisy predictions study. The left, middle and right panels show the average CI volume for noise levels $\sigma_Y = 0.1, 1, 2$.}
    \label{fig:simul_noisy_width_main}
\end{figure}
\begin{figure*}[ht!]
    \centering
    \includegraphics[width=\textwidth]{./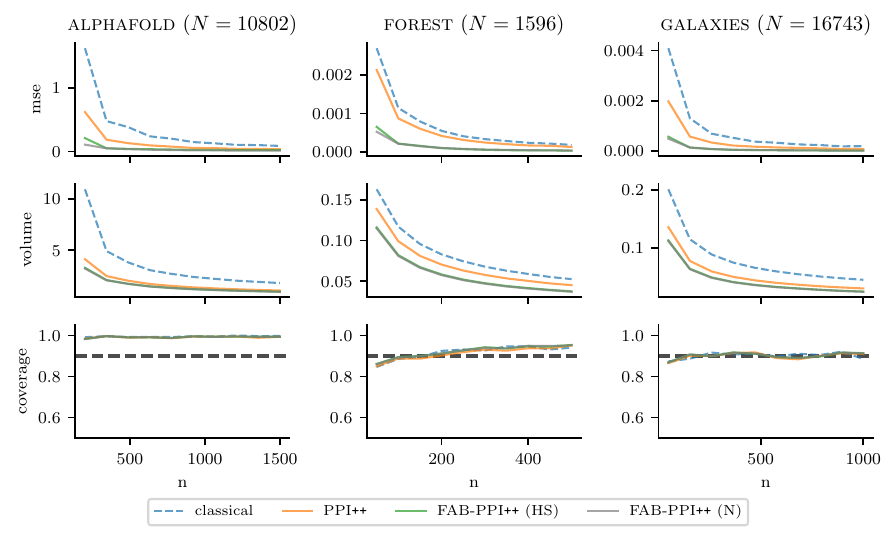}
    \caption{Real data mean estimation study. The left, middle, and right panels correspond to the \textsc{alphafold}, \textsc{galaxies}, and \textsc{forest} datasets. The top, middle, and bottom rows show average MSE, CI volume, and CI coverage over $1000$ repetitions for $\alpha = 0.1$.}
    \label{fig:real_main}
\end{figure*}
In this case, the effect of power tuning matches the observations of \citet{Angelopoulos2023a}: as the noise level increases, $\hat{\lambda}$ decreases and less weight is given to the predicted labels.
When the noise is small, all PPI procedures perform similarly, and much better than classical inference.
When the noise is large, the power-tuned procedures perform similarly to or better than classical inference, whereas the non-tuned alternatives lose ground.
At the intermediate noise level, the power-tuned methods clearly outperform the other baselines.
Crucially, FAB-PPI outperforms the PPI counterpart at all noise levels, with FAB-PPI\texttt{++} being the best performer overall. This is because, in this setting, while predictions exhibit increasing variance with $\sigma_Y$, they remain unbiased. As a result of this, regardless of the value of $\lambda$ used, any additional shrinkage performed on the rectifier by FAB-PPI is beneficial. This example shows that FAB-PPI\texttt{++} retains the benefits of power tuning, while also taking advantage of the adaptive shrinkage provided by the FAB procedure.

\subsection{Real Data}\label{sec:real_data}
We consider several estimation experiments using the datasets presented in \citet{Angelopoulos2023} and briefly described in \cref{app:datasets}. Each dataset comes with covariate/label/prediction triples $\{X_i, Y_i, f(X_i)\}_{i=1}^N$, which we randomly split into two subsets with $n$ labelled and $N - n$ unlabelled observations, for varying values of $n$. For all experiments and methods, we report the average estimation MSE, CI volume and coverage across multiple repetitions.

    We begin with four experiments, where the machine learning predictions provided are of high quality, and whose goals are as follows.
    Two of them are mean estimation tasks performed on the \textsc{galaxies} and \textsc{forest} datasets.
    The third one, performed on the \textsc{alphafold} dataset, is an odds ratio estimation task, for which the construction of confidence intervals also indirectly involves mean estimation as detailed in \cref{app:datasets}.
    The fourth one, involving the \textsc{healthcare} dataset, is a logistic regression task.
    \Cref{fig:real_main} shows the results for classical inference, PPI\texttt{++}, and FAB-PPI\texttt{++} applied to the datasets involving mean estimation.

The mean estimation results for the non power-tuned methods are reported in \cref{fig:real_supplementary}, whereas the ones for the logistic regression experiment are reported in \cref{fig:real_logistic_withgaussian}.
In all cases, FAB-PPI/FAB-PPI\texttt{++} outperform classical inference and the corresponding PPI methods, both in terms of MSE and CI volume, while achieving comparable coverage.
These examples suggest that the quality of the predictions of existing machine learning models on several real datasets may fall into the regime where the adaptive shrinkage provided by the FAB framework leads to a further improvement over standard PPI. In these settings, as the predictions are good, FAB-PPI under the horseshoe and Gaussian priors exhibit similar gains, as already seen in \cref{fig:simul_bias_width_main}.

However, the same is not true in the presence of bad predictions. For instance, \cref{fig:real_quantile_withgaussian} shows the results of a quantile estimation experiment on the \textsc{genes} dataset, where predictions are heavily biased.
In this case, the behaviour of the FAB-PPI methods under the horseshoe and Gaussian priors differs significantly: the former matches the performance of the PPI methods, which outperform classical inference, whereas the latter leads to much larger MSE and CIs.
As previously discussed, such desirable behaviour of FAB-PPI under the horseshoe prior is due to its robustness against large bias levels (\cref{prop:fabppi_robustness}).
Similarly, \cref{fig:real_ols_all} reports the results of a linear regression experiment on the \textsc{census} dataset.
For one of the two parameters considered (panel~(a)), FAB-PPI underperforms the alternatives under both priors for small $n$.
However, as $n$ increases, the performance under the horseshoe prior improves and eventually matches that of the PPI methods, while the Gaussian prior does not.
This example shows another facet of the horseshoe's robustness: even for moderate bias levels, as the available labelled sample size grows, disagreements between the prior and the data become apparent (i.e.~$\var(\widehat{\Delta}_\theta)$ decreases), eventually leading \cref{prop:fabppi_robustness} to take effect.

\section{Discussion and Extensions}
\label{sec:discussion}
We proposed FAB-PPI as a Bayes-informed method to significantly improve the performance of PPI in the presence of high-quality predictions. In doing so, we showed that the horseshoe represents a sensible default prior for FAB-PPI, contrary to the seemingly natural choice of a Gaussian prior. However, several options may be worth exploring.

In particular, the horseshoe prior was chosen due to its popularity and key properties: (i) its spike at zero (ii) power-law tails and  (iii) the closed-form expression for the marginal density $\pi(y)$. However, many other scale-mixture of Gaussians models share these properties.
For example, the family of priors with a beta prime (aka inverted beta) prior over the variance~\citep{Polson2012}, which includes the horseshoe, normal-exponential-gamma~\citep{Griffin2011} and other robust priors \citep{Berger1980,Strawderman1971} as special cases, shares the same three properties.
On the other hand, some other standard priors such as the Laplace prior~\citep{Park2008} or normal-gamma prior~\citep{Caron2008,Griffin2010} do not have power-law tails and therefore do not offer the same robustness guarantees. Other priors, such as the Student-t, lack an analytical expression for $\pi(y)$, therefore requiring additional numerical approximation to be applied to FAB-PPI.

Furthermore, we used the scale $\sigma$ of the noise in the generative model as the scale for both the horseshoe and Gaussian priors, as this allows us to obtain a simple, parameter-free approach, which generally performs well.
Alternatively, one could consider a prior scale of $\eta\sigma$, where $\eta$ is a hyperparameter to be tuned using a validation set.
However, in the case of the horseshoe, this renders the marginal likelihood intractable. While using a rescaled horseshoe prior for FAB-PPI remains feasible through numerical integration, as shown in \cref{fig:real_forest_scaled_powertuning}, this increases the computational cost of the method.
By contrast, a rescaled Gaussian prior would not encounter this issue.
Furthermore, we conjecture that choosing a scale that does not depend on $\sigma$ may resolve the inconsistency of the estimator based on a Gaussian prior, which was discussed in \cref{sec:fabppi_mean_estimation}. \looseness=-1

As a potential drawback, FAB-PPI shares the computational limitations of the PPI approach \citep{Angelopoulos2023}, which are discussed in \citet{Angelopoulos2023a}. In particular, except for special cases such as mean estimation and linear regression, the method requires evaluating $\widehat m_\theta + \widehat\Delta_\theta$ over a grid of values of $\theta$. This can be computationally expensive, especially in high-dimensional settings.

\paragraph{Supplementary Material and Code.}
The supplementary material contains additional background, proofs, and experiments.
All sections, figures, and equations in the supplementary material are prefixed with `S' for clarity.
Code for reproducing the experiments is available at \url{https://github.com/stefanocortinovis/fab-ppi}.

\section*{Acknowledgements}
Stefano Cortinovis is supported by the EPSRC Centre for Doctoral Training in Modern Statistics and Statistical Machine Learning (EP/S023151/1).

\section*{Impact Statement}
This paper presents work whose goal is to advance the field of
Machine Learning. There are many potential societal consequences
of our work, none which we feel must be specifically highlighted here.

\bibliography{references}
\bibliographystyle{include/icml2025}

\newpage
\appendix
\onecolumn

\makeatletter
\renewcommand{\thesection}{S\@arabic\c@section}
\renewcommand{\thetable}{S\@arabic\c@table}
\renewcommand{\thefigure}{S\@arabic\c@figure}
\renewcommand{\theequation}{S\arabic{equation}}
\makeatother

\section{Additional Background Material}
\subsection{Horseshoe Prior}
\label{sec:app:backgroundhorseshoe}

Consider the Gaussian likelihood model
$$
Y\mid \beta\sim \Normal(\beta,\sigma^2)
$$
with standard deviation $\sigma>0$ and mean parameter $\beta\in\bbR$. The horseshoe prior~\citep{Carvalho2010} with density $\pi_\HS$ can be represented as a scale mixture of normals \citep{Andrews1974}
\begin{align}
    \beta\mid \nu^2 &\sim \Normal(0,\eta^2\sigma^2\nu^2)\\
    \nu &\sim C^+(0,1),
\end{align}
where $\eta>0$ and $C^+(0,1)$ is the half-Cauchy distribution with location parameter $0$ and scale parameter $1$. Throughout this section and the main text, we assume $\eta=1$. The rationale for this choice, along with a discussion of the general case $\eta\neq 1$, is provided at the end of this section.

The marginal likelihood is given by
\begin{align*}
\pi(y)&=\int_{-\infty}^{\infty} \Normal(y\mid \beta,\sigma^2)\pi_\HS(\beta)d \beta\\
&=\frac{1}{\sqrt{2\pi\sigma^2}} \int_0^\infty \frac{1}{\sqrt{1+\nu^2}}e^{-\frac{y^2}{2\sigma^2(1+\nu^2)}}p(\nu)d\nu\\
&=\frac{2}{\pi\sqrt{2\pi\sigma^2}} \int_0^\infty e^{-\frac{y^2}{2\sigma^2(1+\nu^2)}}\frac{1}{(1+\nu^2)^{3/2}}d\nu.
\end{align*}
Using the change of variable $u=\frac{1}{1+\nu^2}$, we obtain
\begin{align*}
\pi(y)&=\frac{1}{\pi\sqrt{2\pi\sigma^2}} \int_0^1 e^{-\frac{uy^2}{2\sigma^2}}(1-u)^{-1/2}du\\
& = \frac{2}{\pi\sqrt{2\pi\sigma^2}} ~\OneFOne\left(1,\frac{3}{2},-\frac{y^2}{2\sigma^2}\right),
\end{align*}
where $\OneFOne$ is (Kummer's) confluent hypergeometric function of the first kind, with integral representation
\begin{align*}
\OneFOne(a,b,z)=\frac{\Gamma(b)}{\Gamma(a)\Gamma(b-a)}\int_0^1 e^{zt}t^{a-1}(1-t)^{b-a-1}dt.
\end{align*}
Alternatively, the marginal can be expressed in function of the imaginary error function (erfi) or Dawson function (aka Dawson integral) as
\begin{align*}
\pi(y)&=\frac{1}{\pi\sqrt{2\sigma^2}} e^{-y^2/(2\sigma^2)}\frac{\erfi(|y|/\sqrt{2\sigma^2})}{|y|/(\sqrt{2\sigma^2})}\\
&=\frac{2}{\pi^{3/2}}\frac{1}{|y|}D\left( \frac{|y|}{\sqrt{2\sigma^2}}\right)
\end{align*}
where Dawson's function is defined as
$$
D(z)=e^{-z^2}\int_0^z e^{t^2}dt.
$$ 
The marginal likelihood exhibits power-law tails
$$
\pi(y)\sim C\frac{1}{|y|^2}\quad \text{as } |y|\to\infty
$$
for some constant $C>0$. Let $\ell(y)=\log\pi(y)$ denote the log-marginal likelihood. Kummer's function has the derivative
\begin{align*}
\frac{d}{dz}\OneFOne(a,b,z)=\frac{a}{b}\OneFOne(a+1,b+1,z).
\end{align*}
It follows that
\begin{align*}
    \ell'(y) &= \frac{\pi'(y)}{\pi(y)}\\
    &=-\frac{2}{3}\frac{y}{\sigma^2}\frac{\OneFOne\left(2,\frac{5}{2},-\frac{y^2}{2\sigma^2}\right)}{\OneFOne\left(1,\frac{3}{2},-\frac{y^2}{2\sigma^2}\right)}.
\end{align*}
Applying Tweedie's formula \citep{Efron2011}, we obtain the posterior mean
\begin{align}
    \E[\beta \mid y] &= y + \sigma^2 \ell'(y;\sigma)= (1-\kappa(y)) y,
\end{align}
where the shrinkage function $\kappa(y)\in(0,1)$ is given by
$$
    \kappa(y)=\frac{2}{3}\frac{\OneFOne\left(2,\frac{5}{2},-\frac{y^2}{2\sigma^2}\right)}{\OneFOne\left(1,\frac{3}{2},-\frac{y^2}{2\sigma^2}\right)}.
$$

Using the asymptotic expansion \citep[Chapter 4, Eq. (4.I.3)]{Slater1960}
$$
\OneFOne(a,b,-z)\sim z^{-a}\frac{\Gamma(b)}{\Gamma(b-a)}
$$
as $z\to\infty$, we find
\begin{align*}
\kappa(y)&\sim \frac{2\sigma^2}{y^2}\\
|\bbE[\beta\mid y]-y|=\sigma^2|\ell'(y)|&\sim \frac{2\sigma^2}{|y|}
\end{align*}
as $|y|\to\infty$.

The horseshoe prior $\pi_\HS$ has two key properties: an infinite spike at zero, inducing strong shrinkage near $y=0$, and Cauchy-like tails, ensuring that strong signals remain largely unshrunk ($\kappa(y)\to 0$ and $|\bbE[\beta\mid y]-y|\to 0$  as $|y|\to\infty$). This is illustrated in \cref{fig:shrinkage_horseshoe}.
\begin{figure}[ht!]
    \centering
    \includegraphics[width=.35\textwidth]{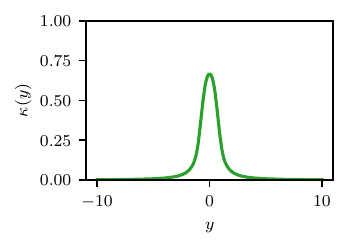}
    \caption{Shrinkage function $\kappa(y)$ for the horseshoe prior when $\sigma^2=0.1$.}
    \label{fig:shrinkage_horseshoe}
\end{figure}

\begin{remark}[Parameterisation]
In this section and in the main text, we focused on the specific parameterisation $\eta=1$. For a general $\eta$, similar expressions can be derived for the marginal likelihood and posterior mean, replacing Kummer's $\OneFOne$ function with the more general degenerate hypergeometric function of two variables, $\Phi_1$ (see \citep[Equations (4) in the main text and (A1) in the appendix]{Carvalho2010}). While Kummer's $\OneFOne$ function is implemented in many standard scientific libraries, such as SciPy, $\Phi_1$ is not. Consequently, computing the marginal likelihood when $\eta\neq 1$ requires numerical integration. Since the evaluation of the marginal likelihood is crucial to our approach, it is therefore reasonable to set $\eta=1$ here.
\end{remark}

\subsection{FAB Framework}
\label{sec:app:backgroundFAB}

In this section, we provide additional background on the FAB framework~\citep{Pratt1961,Pratt1963,Yu2018}.

Let $Y\mid \beta\sim\Normal(\beta,\sigma^2)$ with some prior $\pi_0(\beta)$. Denote by $\pi(y)=\int_\bbR p(y\mid \beta)\pi_0(\beta)d\beta$ the corresponding marginal likelihood.
For $\alpha\in(0,1)$, let $\calC_\alpha$ be the confidence procedure that solves the constrained optimisation problem
\begin{align*}
&\calC_\alpha=\arg\min_{\widetilde \calC_\alpha} \E[\vol(\widetilde\calC_\alpha(Y))]\\
\text{under the constraints }&\Pr(\beta\in \calC_\alpha(Y)\mid \beta=\beta')=1-\alpha \text{ for all fixed }\beta',
\end{align*}
where   $\vol(\calC_\alpha(y))=\int_{\beta'\in\calC_\alpha(y)}d\beta'$ is the volume of $\calC_\alpha(y)$ and
\begin{equation}
    \E[\vol(\calC_\alpha(Y))] = \int_{\bbR} \vol(\calC_\alpha(y))\pi(y)dy
\end{equation}
is the expected volume under the marginal distribution $\pi(y)$. By the Ghosh-Pratt identity \citep{Ghosh1961,Pratt1961},
\begin{align*}
    \E[\vol(\calC_\alpha(Y))] &= \int_\bbR \vol(\calC_\alpha(y))\pi(y)dy\\
    &=\int_\bbR \int_\bbR 1_{\beta'\in \calC_\alpha(y)} d\beta'\pi(y)dy\\
    &=\int_\bbR \Pr(\beta'\in \calC_\alpha(Y))d\beta'.
\end{align*}
That is, minimising $\E[\vol(\calC_\alpha(Y))]$ is equivalent to minimising $\Pr(\beta'\in \calC_\alpha(Y))$ for each $\beta'\in\bbR$. Define the acceptance region 
$$
A_\alpha(\beta')=\{y\mid \beta'\in \calC_\alpha(y)\}.
$$
The constrained optimisation problem above then reduces to solving, for each $\beta'$,
\begin{align*}
&A_\alpha(\beta')=\arg\max_{\widetilde A_\alpha} \Pr(Y \notin \widetilde A_\alpha(\beta')) \\
\text{such that }&\Pr(Y\notin A_\alpha(\beta)\mid \beta=\beta')=\alpha.
\end{align*}
The term $\Pr(Y \notin  A_\alpha(\beta'))$ may be interpreted as the power of a size-$\alpha$ test 
$$H_0:\beta=\beta'\text{ vs }H_1:\beta\sim\pi_0,$$
where $Y\mid \beta\sim\Normal(\beta,\sigma^2)$. By the Neyman-Pearson lemma, the most powerful test is of the form
$$
A_\alpha(\beta')=\left \{y\mid \frac{\pi(y)}{p(y\mid \beta')}\leq k_\alpha(\beta')\right \}
$$
where $k_\alpha(\beta')$ is such that $\Pr(Y\in A_\alpha(\beta)\mid \beta=\beta')=1-\alpha$. The acceptance region is an interval $[\loA_\alpha(\beta'),\upA_\alpha(\beta')]$ \citep[Theorem~3.3]{Cortinovis2024}. Defining
$$
w_\alpha(\beta')=\frac{1}{\alpha}\Phi\left( \frac{\loA_\alpha(\beta')-\beta'}{\sigma}\right),
$$
the confidence region is given by
\begin{equation}
    \calC_\alpha(y)=\{\beta' \mid \loA_\alpha(\beta')=\beta' -\sigma z_{1-\alpha w_\alpha(\beta')}\leq y \leq \beta'+\sigma z_{1-\alpha(1-w_\alpha(\beta')) }=\upA_\alpha(\beta')\}. \label{eq:fabC}
\end{equation}
The function $w_\alpha(\beta')\in[0,1]$ is called the spending function or tail function \citep{Puza2006,Yu2018}, and represents the proportion of the $\alpha$ rejection budget allocated to the left tail of the acceptance interval $[\loA_\alpha(\beta'),\upA_\alpha(\beta')]$.

The spending function $w_\alpha$ satisfies several key properties, which will be useful for our asymptotic analysis. Most of these originate from \citet{Cortinovis2024}. Under mild assumptions on the prior, satisfied for the models considered in this paper, $w_\alpha(\beta)$ is continuous in $\beta$. If the prior $\pi_0$ is symmetric around zero, we have
\begin{align}
w_\alpha(-\beta')=1-w_\alpha(\beta').
\label{eq:propw1}
\end{align}

Additionally, if the prior $\pi_0(\beta):=\pi_0(\beta;\sigma)$ admits $\sigma$ as a scale parameter, writing $w_\alpha(\beta;\sigma)$ for the corresponding tail function, we have
\begin{align}
w_\alpha(\beta;\sigma)=w_\alpha\left(\frac{\beta}{\sigma};1\right).
\label{eq:propw2}
\end{align}

We now describe other properties of the spending function in the case of a Gaussian prior and of a prior with power-law tails, such as the horseshoe.
\begin{proposition}[FAB with a Gaussian prior \citep{Pratt1963,Yu2018}]
\label{prop:fabgaussianprop}
If the prior $\pi_0(\beta)=\mathcal N(\beta;0,\tau^2\sigma^2)$ is Gaussian, the spending function is given by $w_\alpha(\beta)=g_\alpha^{-1}\left(\frac{2\beta}{\sigma\tau^2}\right)$, where $g_\alpha:(0,1)\to\bbR$ is the one-to-one function
\begin{equation}
g_\alpha(\omega)=\Phi^{-1}(\alpha \omega)-\Phi^{-1}(\alpha(1-\omega)).
\label{eq:galpha}
\end{equation}
$w_\alpha$ is strictly increasing and
$$
\lim_{\beta\to\infty} w_\alpha(\beta)=1.
$$
\end{proposition}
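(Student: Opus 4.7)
The plan is to specialise the FAB construction recalled in \cref{sec:app:backgroundFAB} to the Gaussian prior and exploit the fact that the resulting log-likelihood ratio is quadratic in $y$. Recall that the Neyman--Pearson acceptance region has the form
\begin{equation*}
A_\alpha(\beta') = \left\{y \,\Big|\, \tfrac{\pi(y)}{p(y\mid\beta')} \leq k_\alpha(\beta')\right\} = [\loA_\alpha(\beta'),\upA_\alpha(\beta')],
\end{equation*}
where, crucially, the endpoints satisfy the likelihood-ratio equality $\pi(\loA)/p(\loA\mid\beta') = \pi(\upA)/p(\upA\mid\beta')$.

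Under the Gaussian prior, the marginal is $\pi(y) = \Normal(y;0,\sigma^2(1+\tau^2))$, so
\begin{equation*}
\log\frac{\pi(y)}{p(y\mid\beta')} = -\tfrac{1}{2}\log(1+\tau^2) - \frac{y^2}{2\sigma^2(1+\tau^2)} + \frac{(y-\beta')^2}{2\sigma^2},
\end{equation*}
which is a quadratic in $y$ with a strictly positive coefficient on $y^2$ (confirming that $A_\alpha(\beta')$ is an interval and that the two endpoints lie symmetrically around the minimiser of this quadratic). Equating the values at $\loA$ and $\upA$, expanding, and dividing by the nonzero factor $\upA - \loA$ yields the linear relation $(\upA + \loA)\cdot \tfrac{\tau^2}{1+\tau^2} = 2\beta'$, i.e.
\begin{equation*}
\upA_\alpha(\beta') + \loA_\alpha(\beta') = \frac{2\beta'(1+\tau^2)}{\tau^2}.
\end{equation*}

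Next I would substitute the parameterisation $\loA = \beta' - \sigma z_{1-\alpha w} = \beta' + \sigma\Phi^{-1}(\alpha w)$ and $\upA = \beta' + \sigma z_{1-\alpha(1-w)} = \beta' - \sigma\Phi^{-1}(\alpha(1-w))$ with $w := w_\alpha(\beta')$. Adding these gives $\upA + \loA = 2\beta' + \sigma g_\alpha(w)$, and matching with the display above produces $\sigma g_\alpha(w) = 2\beta'/\tau^2$, hence
\begin{equation*}
w_\alpha(\beta') = g_\alpha^{-1}\!\left(\frac{2\beta'}{\sigma\tau^2}\right),
\end{equation*}
provided $g_\alpha$ is invertible. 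Differentiating gives $g_\alpha'(\omega) = \alpha/\phi(\Phi^{-1}(\alpha\omega)) + \alpha/\phi(\Phi^{-1}(\alpha(1-\omega))) > 0$, so $g_\alpha:(0,1)\to\bbR$ is strictly increasing and thus a bijection onto its range. A direct inspection of the limits $\omega\to 0^+$ and $\omega\to 1^-$ shows that this range is all of $\bbR$.

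Strict monotonicity of $w_\alpha$ in $\beta$ then follows from the composition of two strictly increasing maps, and the limit $\lim_{\beta\to\infty} w_\alpha(\beta) = 1$ follows from $2\beta/(\sigma\tau^2)\to\infty$ together with $g_\alpha^{-1}(t) \to 1$ as $t\to\infty$. There is no real obstacle here beyond organising the calculation cleanly; the only point requiring a little care is justifying that the likelihood-ratio equality at the two endpoints is the sole nontrivial relation one needs (symmetric coverage is folded into the definition of $w$), which is already packaged in the general FAB formula \eqref{eq:fabC} recalled in \cref{sec:app:backgroundFAB}.
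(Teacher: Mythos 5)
Your derivation is correct. The paper itself does not prove this proposition---it is stated with citations to \citet{Pratt1963,Yu2018} as a known result---so there is no in-paper argument to compare against; your route (writing the Neyman--Pearson acceptance region for the Gaussian marginal $\Normal(0,\sigma^2(1+\tau^2))$, using equality of the likelihood ratio at the two endpoints of the convex-quadratic sublevel set to get $\upA+\loA = 2\beta'(1+\tau^2)/\tau^2$, and matching against the spending-function parameterisation in \eqref{eq:fabC}) is exactly the standard derivation from those references, and all the individual steps (the sum identity, $\sigma g_\alpha(w)=2\beta'/\tau^2$, the positivity of $g_\alpha'$, and the surjectivity of $g_\alpha$ onto $\bbR$) check out.
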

\begin{proposition}[FAB with a prior with power-law tails {\citep[Lemma~S1.1]{Cortinovis2024}}]\label{prop:horseshoefab}
    Let $\pi_0(\beta;\sigma)$ be a symmetric prior on $\beta$ such that the marginal density $\pi(y)$ has power-law tails, i.e.
    $$
        \pi(y)\sim C_\sigma |y|^{-\delta} \text{ as }|y|\to\infty
    $$
    for some constant $C_\sigma$ and some exponent $\delta > 1$.
    Then, $w_\alpha(\beta;\sigma)$ is bounded away from $0$ and $1$, and
    $$
        \lim_{\beta\to\infty} w_\alpha(\beta)=\lim_{\beta\to -\infty} w_\alpha(\beta)=\frac{1}{2}.
    $$
\end{proposition}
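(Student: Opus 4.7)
The plan is to work directly with the Neyman--Pearson characterisation of the FAB acceptance region recalled in \cref{sec:app:backgroundFAB} and exploit the contrast between the power-law tails of $\pi$ and the exponential tails of the Gaussian likelihood $p(y\mid\beta)$. By symmetry of $\pi_0$ and property \eqref{eq:propw1}, $w_\alpha(-\beta;\sigma) = 1 - w_\alpha(\beta;\sigma)$, so the limit at $-\infty$ follows from the one at $+\infty$ and it suffices to analyse $\beta\to+\infty$. Fix $\sigma>0$ and write the acceptance region as $[\loA(\beta),\upA(\beta)]$, reparameterised as $\loA(\beta)=\beta-\sigma u(\beta)$, $\upA(\beta)=\beta+\sigma v(\beta)$ with $u(\beta),v(\beta)>0$; the fact that the interval straddles $\beta$ for large $\beta$ follows because $\pi(\beta)/p(\beta\mid\beta) \sim C_\sigma\sqrt{2\pi\sigma^2}\,\beta^{-\delta} \to 0$, while the likelihood ratio $\pi(y)/p(y\mid\beta)$ diverges as $|y|\to\infty$ (power-law over Gaussian).

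The key computation is to show $u(\beta)-v(\beta)\to 0$. Since the likelihood ratio takes the same value $k_\alpha(\beta)$ at both endpoints, dividing the two defining equations eliminates $k_\alpha(\beta)$ and gives
\begin{equation*}
    \frac{\pi(\beta-\sigma u)}{\pi(\beta+\sigma v)} \;=\; \exp\!\left(\tfrac{v^2-u^2}{2}\right).
\end{equation*}
The Neyman--Pearson coverage constraint $\Phi(-u(\beta)) + \Phi(-v(\beta)) = \alpha$ forces $u(\beta),v(\beta)$ to stay bounded, since otherwise the left-hand side would fall below $\alpha/2$. With $u,v$ bounded and $\beta\to\infty$, both $\beta-\sigma u$ and $\beta+\sigma v$ diverge, so the power-law asymptotic $\pi(y)\sim C_\sigma|y|^{-\delta}$ applies at both endpoints, yielding
\begin{equation*}
    \left(\frac{\beta+\sigma v}{\beta-\sigma u}\right)^{\!\delta}\!(1+o(1)) \;=\; \exp\!\left(\tfrac{v^2-u^2}{2}\right).
\end{equation*}
The left-hand side converges to $1$ under the boundedness of $u,v$, so $v^2-u^2 \to 0$ and, combined with positivity and boundedness, $u(\beta)-v(\beta)\to 0$.

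Plugging $u-v\to 0$ back into the coverage constraint and using continuity of $\Phi$ shows that $u(\beta)$ and $v(\beta)$ both converge to the common solution of $2\Phi(-t)=\alpha$, namely $t=z_{1-\alpha/2}$, so
\begin{equation*}
    w_\alpha(\beta;\sigma) \;=\; \frac{1}{\alpha}\Phi\!\left(\frac{\loA(\beta)-\beta}{\sigma}\right) \;=\; \frac{\Phi(-u(\beta))}{\alpha} \;\longrightarrow\; \frac{\alpha/2}{\alpha} \;=\; \frac{1}{2}.
\end{equation*}
For boundedness away from $0$ and $1$, the blow-up of $\pi(y)/p(y\mid\beta)$ at $\pm\infty$ already implies that both endpoints $\loA(\beta),\upA(\beta)$ are finite for every $\beta$, giving $w_\alpha(\beta)\in(0,1)$ pointwise. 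Continuity of $w_\alpha$ on $\bbR$ together with the two limits at $\pm\infty$ both equal to $1/2$ then imply that its image is a compact subset of $(0,1)$, hence uniformly bounded away from $0$ and $1$.

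The main technical obstacle I anticipate is controlling the remainder in the power-law asymptotic uniformly in $\beta$, which crucially rests on first establishing a priori boundedness of $u(\beta),v(\beta)$ via the coverage constraint. A secondary point is verifying that the Neyman--Pearson rejection region consists exactly of the two tails $(-\infty,\loA(\beta))\cup(\upA(\beta),\infty)$ for large $\beta$, which follows from the fact that $\pi(y)/p(y\mid\beta)$ attains its minimum near $y=\beta$ for $\beta$ large and is strictly increasing outward in each direction beyond that minimum, in conjunction with the interval property of the FAB acceptance region established in \citet[Theorem~3.3]{Cortinovis2024}.
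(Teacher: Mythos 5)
The paper does not actually prove \cref{prop:horseshoefab}; it imports it verbatim from \citet[Lemma~S1.1]{Cortinovis2024}, so there is no in-paper proof to compare against. Judged on its own merits, your Neyman--Pearson route (parameterise the acceptance interval as $[\beta-\sigma u(\beta),\,\beta+\sigma v(\beta)]$, eliminate $k_\alpha(\beta)$ by equating the likelihood ratio at the two endpoints, and pass to the power-law asymptotics) is the natural one and the second half of the argument is correct: \emph{given} that $u$ and $v$ are bounded, the identity $\pi(\beta-\sigma u)/\pi(\beta+\sigma v)=\exp((v^2-u^2)/2)$ together with $\pi(y)\sim C_\sigma|y|^{-\delta}$ forces $v^2-u^2\to 0$, and the coverage constraint $\Phi(-u)+\Phi(-v)=\alpha$ then pins both down to $z_{1-\alpha/2}$, giving $w_\alpha\to 1/2$; the compactness argument for uniform boundedness away from $0$ and $1$ is also fine.

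The genuine gap is the boundedness step, which is the crux of the proof. You claim that the coverage constraint $\Phi(-u(\beta))+\Phi(-v(\beta))=\alpha$ forces both $u$ and $v$ to stay bounded ``since otherwise the left-hand side would fall below $\alpha/2$.'' That only rules out \emph{both} diverging simultaneously: if, say, $v\to\infty$ while $u\to z_{1-\alpha}$, the sum remains exactly $\alpha$. This is not a hypothetical loophole --- it is precisely what happens for the Gaussian prior (\cref{prop:fabgaussianprop}), where the same coverage constraint holds yet $w_\alpha(\beta)\to 1$, i.e.\ one endpoint escapes to infinity. So boundedness cannot come from the coverage constraint alone; it must be extracted from the power-law tails. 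The fix is to run a contradiction through the same endpoint identity you already use: if $v\to\infty$ along a subsequence, then $u$ is bounded, $k_\alpha(\beta)=\pi(\beta-\sigma u)\sqrt{2\pi}\sigma e^{u^2/2}\asymp\beta^{-\delta}$, while the other endpoint gives $k_\alpha(\beta)=\pi(\beta+\sigma v)\sqrt{2\pi}\sigma e^{v^2/2}\gtrsim(1+\beta+\sigma v)^{-\delta}e^{v^2/2}$, and the Gaussian factor $e^{v^2/2}$ overwhelms the polynomial loss $((\beta+\sigma v)/\beta)^{\delta}$ in every regime of $v$ versus $\beta$; the symmetric argument (using a global lower bound $\pi(y)\geq c(1+|y|)^{-\delta}$ and $\pi$ bounded above) rules out $u\to\infty$. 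Until this is supplied, the uniform application of the asymptotic $\pi(y)\sim C_\sigma|y|^{-\delta}$ at both endpoints --- which you yourself flag as the main technical obstacle --- is not justified.
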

The difference between the spending functions of the two priors greatly affects the resulting FAB confidence regions. In particular, while both priors lead to confidence regions that are shorter than the classical one when the observed $y$ is close to zero, their behaviour differs as the disagreement between the prior and the data increases. In particular, the FAB confidence regions under the Gaussian prior become unbounded as $|y|$ grows, while the horseshoe prior leads to confidence regions that eventually revert to the classical confidence interval. This is illustrated in \cref{fig:fab_comparison}.
\begin{figure}
    \centering
    \includegraphics[width=0.8\textwidth]{./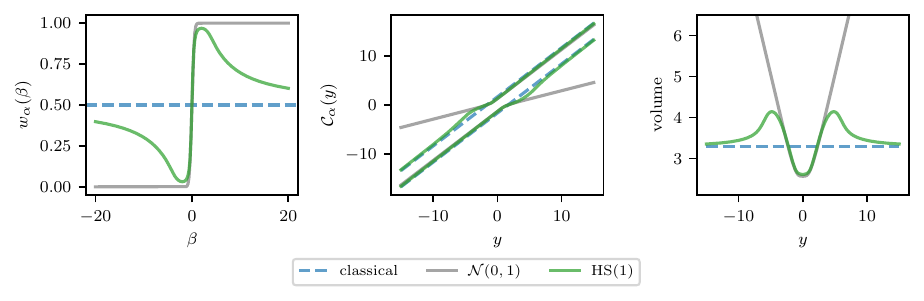}
    \caption{Comparison of the FAB procedures under a Gaussian ($\tau^2 = 1$) and a horseshoe ($\eta = 1$) priors when $\sigma^2 = 1$ and $\alpha = 0.1$.}
    \label{fig:fab_comparison}
\end{figure}

\section{Derivations}
\subsection{FAB-PPI for Mean Estimation}\label{app:fabppi_squared_loss}
Here, we outline the steps to derive the FAB-PPI mean estimator presented in \cref{eq:mean_estimator_fabppi}, as well as the corresponding FAB-PPI confidence region.

The convex loss function that corresponds to estimating $\theta^\star = \mathbb{E}[Y]$ is the squared loss $\mathcal{L}_\theta(x, y) = \frac{1}{2}\left(\theta - y\right)^2$. In this case, the subgradient of $\mathcal{L}_\theta$ with respect to $\theta$ is given by $\mathcal{L}_\theta'(x, y) = \theta - y$. As a result of this, the measure of fit $m_\theta$ and the rectifier $\Delta_\theta$ take the form
\begin{align*}
    m_\theta &= \mathbb{E}[\mathcal{L}_\theta'(X, f(X))] = \theta - \mathbb{E}[f(X)], \\
    \Delta_\theta &= \mathbb{E}[\mathcal{L}_\theta'(X, Y) - \mathcal{L}_\theta'(X, f(X))] = \mathbb{E}[f(X) - Y].
\end{align*}
In particular, under the squared loss, the rectifier $\Delta_\theta$ does not depend on $\theta$, and we indicate this by dropping the subscript $\theta$ and writing $\Delta := \Delta_\theta$.

In order to apply FAB-PPI to this setting, we follow the steps outlined in \cref{sec:fab-ppi}. In particular, we use the sample mean of the unlabelled data \eqref{eq:msamplemean} as the estimator $\widehat{m}_\theta$ of $m_\theta$,
\begin{equation*}
    \widehat{m}_\theta = \frac{1}{N} \sum_{i=1}^N \mathcal{L}_\theta'(\tX_i, f(\tX_i)) = \theta - \frac{1}{N}\sum_{i=1}^N f(\tX_i),
\end{equation*}
and either the sample mean \eqref{eq:deltasamplemean} or the control variate estimator \eqref{eq:deltahatpowertuning} as the estimator $\widehat{\Delta}$ of $\Delta$, as in PPI and PPI\texttt{++}, respectively. To avoid repetitions, in this section we write $\widehat{\Delta}$ as the following general control variate estimator with tuning parameter $\lambda \in \mathbb{R}$,
\begin{align*}
    \widehat \Delta&=\frac{1}{n}\sum_{i=1}^n \left(\mathcal{L}_{\theta}'(X_i,Y_i)-\mathcal{L}_{\theta}'(X_i,f(X_i))\right) - (\lambda - 1) \left( \frac{1}{n}\left [\sum_{i=1}^n \mathcal{L}_{\theta}'(X_i,f(X_i))\right ]  -\widehat m_\theta\right) \\
    &= \frac{1}{n}\sum_{i=1}^n \left(\mathcal{L}_{\theta}'(X_i,Y_i)- \lambda \mathcal{L}_{\theta}'(X_i,f(X_i))\right) + (\lambda - 1) \widehat m_\theta \\
    &= -\overline Y + \lambda \left(\frac{1}{n}\sum_{i=1}^n f(X_i) - \frac{1}{N}\sum_{j=1}^N f(\tX_j)\right) + \frac{1}{N}\sum_{j=1}^N f(\tX_j).
\end{align*}
The sample mean estimator \eqref{eq:deltasamplemean} and the control variate estimator \eqref{eq:deltahatpowertuning} under the squared loss are recovered by setting $\lambda$ to $1$ and $\widehat\lambda$ as in \cref{eq:lambdahat}, respectively. From this, the standard PPI mean estimators \eqref{eq:thetahatppi} and \eqref{eq:thetahatppipp} are obtained by solving the equation $\widehat m_\theta + \widehat \Delta = 0$ for $\theta$.

Instead, we first define the Bayes-assisted estimator \eqref{eq:FABPPIestimatorDelta} under the chosen prior $\pi_0(\Delta; \tau_n)$,
\begin{equation*}
    \widehat\Delta^{\FABPPI} = \widehat\Delta + \widehat\sigma^2 \ell'\left(\widehat\Delta; \widehat\sigma, \tau_n\right),
\end{equation*}
where $\widehat\sigma^2$ is an estimator of $\var(\widehat\Delta)$ and $\ell_\theta'(z; \sigma, \tau)$ is the derivative of the log-marginal likelihood of a Gaussian likelihood model with mean $\Delta$ and variance $\sigma^2$ under the prior $\pi_0(\Delta, \tau)$. Then, the FAB-PPI mean estimator $\widehat\theta^{\FABPPI}$ under $\pi_0$ is given by the solution to the equation
\begin{equation*}
    \widehat m_\theta + \widehat\Delta^{\FABPPI} = 0
\end{equation*}
in $\theta$, that takes the form
\begin{align*}
    \widehat\theta^{\FABPPI} &= \overline Y - \lambda \left(\frac{1}{n}\sum_{i=1}^n f(X_i) - \frac{1}{N}\sum_{j=1}^N f(\tX_j)\right) - \widehat\sigma^2 \ell'\left(\widehat\Delta; \widehat\sigma, \tau_n\right),
\end{align*}
which matches the expression in \cref{eq:mean_estimator_fabppi}. Furthermore, by recognising that the first two terms in the above expression match \eqref{eq:thetahatppipp}, we can alternatively write the FAB-PPI mean estimator as
\begin{align*}
    \widehat\theta^{\FABPPI} = \widehat{\theta} - \widehat\sigma^2 \ell'\left(\widehat\Delta; \widehat\sigma, \tau_n\right),
\end{align*}
where $\widehat\theta$ is the corresponding standard PPI mean estimator.

Given $\alpha \in (0, 1)$, we construct the FAB-PPI confidence region $\calC_\alpha^{\FABPPI}$ for the mean as described in \cref{sec:fabppi_cr}. In particular, let $\mathcal{T}_{\alpha - \delta}(\widehat m_\theta)$ denote a standard $1 - (\alpha - \delta)$ confidence interval for $m_\theta$,
\begin{align*}
    \mathcal{T}_{\alpha - \delta}(\widehat m_\theta) = [\widehat m_\theta \pm \widehat\sigma^f z_{1 - (\alpha - \delta) / 2}] = \left[\theta - \widehat{\theta}^f \pm \widehat\sigma^f z_{1 - (\alpha - \delta) / 2}\right],
\end{align*}
where $\widehat\theta^f := \frac{1}{N}\sum_{i=1}^N f(\tX_i)$ for conciseness, and $(\widehat\sigma^f)^2$ is an estimator of $\var(\widehat m_\theta)$. Then, we apply the FAB framework under the prior $\pi_0(\Delta; \tau_n)$ to obtain a $1 - \delta$ confidence region for $\Delta$,
\begin{align*}
    \calR^{\FABPPI}_\delta(\widehat\Delta) = \text{FAB-CR}(\widehat\Delta; \pi_0(\cdot~;\tau_n), \widehat\sigma, \delta),
\end{align*}
where, again, $\widehat\sigma^2$ is an estimator of $\var(\widehat\Delta)$. Finally, to avoid making assumptions on the specific form of $\calR^{\FABPPI}_\delta(\widehat\Delta)$, we use $[\inf\calR^{\FABPPI}_\delta), \sup(\calR^{\FABPPI}_\delta)] \supseteq \calR^{\FABPPI}_\delta(\widehat\Delta)$ in the definition of $\calC_\alpha^{\FABPPI}$ \eqref{eq:fabppiconfidenceinterval} to obtain the FAB-PPI interval
\begin{align*}
    \calC_\alpha^{\FABPPI} &= \left\{\theta \mid 0\in \left[\theta - \widehat\theta^f \pm \widehat\sigma^f z_{1 - (\alpha - \delta) / 2}\right] + [\inf(\calR^{\FABPPI}_\delta), \sup(\calR^{\FABPPI}_\delta)]\right\} \\
    &= \left\{\theta \mid 0\in \left[\theta - \widehat\theta^f - \widehat\sigma^f z_{1 - (\alpha - \delta) / 2} + \inf(\calR^{\FABPPI}_\delta), \theta - \widehat\theta^f + \widehat\sigma^f z_{1 - (\alpha - \delta) / 2} + \sup(\calR^{\FABPPI}_\delta)\right]\right\} \\
    &= \left[\widehat\theta^f - \widehat\sigma^f z_{1 - (\alpha - \delta) / 2} - \sup(\calR^{\FABPPI}_\delta), \widehat\theta^f + \widehat\sigma^f z_{1 - (\alpha - \delta) / 2} - \inf(\calR^{\FABPPI}_\delta)\right].
\end{align*}
\cref{algo:FABPPIMean} summarises the FAB-PPI approach under the squared loss, where $\widehat\xi$ is defined for notational convenience and the corresponding sample variances are used as $(\widehat\sigma^f)^2$ and $\widehat\sigma^2$.
\begin{algorithm}
    \caption{FAB-PPI for mean estimation}
    \label{algo:FABPPIMean}
    {\bfseries Input:} labelled $\{(X_i,Y_i)\}_{i=1}^n$, unlabelled $\{\tX_j\}_{j=1}^N$, predictor $f$, prior $\pi_0(\cdot~; \tau_n)$, error levels $\alpha$, $\delta$
    \begin{algorithmic}
        \STATE Set $\widehat\lambda=1$ (FAB-PPI) or estimate $\widehat\lambda$ from data (FAB-PPI\texttt{++}) using \cref{eq:lambdahat}
        \STATE $\widehat\theta^f \gets \frac{1}{N}\sum_{j=1}^N f(\tX_j)$
        \STATE $\widehat\xi \gets \frac{1}{n}\sum_{i=1}^n (\widehat\lambda f(X_i) - Y_i)$
        \STATE $\widehat \Delta \gets \widehat\xi  - (\widehat\lambda-1)\widehat\theta^f$
        \STATE $(\widehat\sigma^f)^2 \gets \frac{1}{N(N-1)}\sum_{j=1}^N (f(\tX_j)-\widehat\theta^f)^2$
        \STATE $\widehat\sigma_\xi^2 = \frac{1}{n-1}\sum_{i=1}^n (\widehat\lambda f(X_i) - Y_i - \widehat\xi)^2$
        \STATE $\widehat\sigma^2 \gets \frac{1}{n}\widehat\sigma_\xi^2 + (\widehat\lambda-1)^2(\widehat\sigma^f)^2$
        \STATE $\calR^{\FABPPI}_\delta \gets \text{FAB-CR}(\widehat \Delta;\pi_0(\cdot~;\tau_n), \widehat\sigma,\delta)$
    \end{algorithmic}
    {\bfseries Outputs:} estimator $\widehat \theta^{\FABPPI} =  \widehat\theta^f - \widehat \Delta - \widehat\sigma^2 \ell'(\widehat\Delta; \widehat\sigma, \tau_n)$ and CR $\calC_\alpha^\FABPPI=[\widehat\theta^f-\widehat{\sigma}^f z_{1-(\alpha-\delta)/2}-\sup(\calR^{\FABPPI}_\delta),\widehat\theta^f+\widehat{\sigma}^f z_{1-(\alpha-\delta)/2}-\inf(\calR^{\FABPPI}_\delta)]$
\end{algorithm}

\section{Proofs}
\label{app:proofs}
Some of the results discussed in this section, such as \cref{lemma:fabcr_horseshoe_convergence} and \cref{prop:limitstandardCI}, are concerned with the convergence of closed sets with respect to the Hausdorff distance, which we recall here for completeness.
Given two closed subsets $C_1$ and $C_2$ of $\bbR$, their Hausdorff distance $\dHaus$ is defined as
\begin{equation*}
    \dHaus(C_1,C_2)=\max\left\{\sup_{x\in C_1}\inf_{y\in C_2} |x-y|, \sup_{y\in C_2}\inf_{x\in C_1} |x-y| \right\}.
\end{equation*}
Then, for a collection of closed subsets $(C_1(y))_{y\in\bbR}$ and a closed subset  $C_2$ of $\bbR$, $(C_1(y))_{y\in\bbR}$ converges in Hausdorff distance to $C_2$ if $\lim_{y\to\infty}\dHaus(C_1(y),C_2)= 0$.
In particular, if $C_2=[a,b]$ is a closed interval for some $a<b$, $\lim_{y\to\infty}\dHaus(C_1(y),C_2)= 0$ if and only if, for all $\epsilon\in(0,\frac{b-a}{2})$, there exists $y_0$ such that $[a+\epsilon,b-\epsilon]\subseteq C_1(y) \subseteq [a-\epsilon,b+\epsilon]$ for all $y>y_0$.
In the sequel, we write $\lim_{y\to\infty}C_1(y)= C_2$ for $\lim_{y\to\infty}\dHaus(C_1(y),C_2)= 0$.

\subsection{\cref{thm:fabppi_coverage} - Asymptotic Coverage of FAB-PPI under the Gaussian and Horseshoe Priors}
\label{app:thm:fabppi_coverage}
Under the prior $\pi_0(\cdot; \widehat{\sigma}_\theta )$, the FAB confidence region for the rectifier $\Delta_\theta$ is
\begin{align}
    \calR^{\FABPPI}_\delta(\widehat{\Delta}_\theta; \widehat{\sigma}_\theta )=\{\Delta_\theta  \mid \Delta_\theta -\widehat{\sigma}_\theta z_{1-\delta w_\delta(\Delta_\theta; \widehat{\sigma}_\theta)} \leq \widehat\Delta_\theta \leq \Delta_\theta + \widehat{\sigma}_\theta z_{1-\delta(1-w_\delta(\Delta_\theta; \widehat{\sigma}_\theta))}\},
\label{eq:FABdelta}
\end{align}
where $w_\delta(\cdot; \widehat{\sigma}_\theta)$ is the FAB spending function. The proof of asymptotic coverage is organised as follows. First, we show that $\calR^{\FABPPI}_\delta(\widehat{\Delta}_\theta; \widehat{\sigma}_\theta )$ is asymptotically a $1-\delta$ confidence interval.
This is established via \cref{lemma:coverage_gaussian_delta} for the Gaussian prior, and via \cref{lemma:fabcr_horseshoe_convergence} for the horseshoe prior.
This result is then combined with the asymptotic coverage of the standard sample mean estimator for $m_\theta$ to conclude the asymptotic coverage of the FAB-PPI estimator of $\theta^\star$.

 We first prove the following lemma for the Gaussian prior, demonstrating that the rectifier has the correct asymptotic coverage.
 
 \begin{lemma}
    \label{lemma:coverage_gaussian_delta}
    Let $\widehat{\Delta}_\theta$ be a consistent estimator of $\Delta_\theta$ such that a CLT holds for $\widehat{\Delta}_\theta$, i.e.
    \begin{equation*}
        \frac{\widehat{\Delta}_\theta - \Delta_\theta}{\widehat{\sigma}_\theta } \to \mathcal{N}(0, 1)
    \end{equation*}
    as $\min(n,N) \to \infty$, where $\frac{\widehat{\sigma}_\theta^2}{\var(\widehat{\Delta}_\theta)}\to 1$ almost surely. Let $\pi_{0}(\cdot; \widehat{\sigma}_\theta)$ be the Gaussian prior \eqref{eq:Gaussianprior} for $\Delta_\theta$ and consider the corresponding $1 - \delta$ FAB confidence region
    \begin{equation*}
        \calR^{\FABPPI}_\delta(\widehat{\Delta}_\theta; \widehat{\sigma}_\theta ) = \text{FAB-CR}\left(\widehat{\Delta}_\theta; \pi_{0}\left(\cdot~; \widehat{\sigma}_\theta\right), \widehat{\sigma}_\theta, \delta\right).
    \end{equation*}
    Then
    \begin{align}
    {\lim\inf}_{\min(n,N)\to\infty}\Pr(\Delta_\theta \in \calR^{\FABPPI}_\delta(\widehat{\Delta}_\theta; \widehat{\sigma}_\theta )\mid \Delta_\theta)\geq 1-\delta.
    \end{align}
\end{lemma}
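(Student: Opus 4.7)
My plan is to rewrite the coverage event in terms of the standardised pivot $Z_n := (\widehat\Delta_\theta - \Delta_\theta)/\widehat\sigma_\theta$, and then combine the CLT assumption with the known behaviour of the Gaussian FAB spending function recorded in \cref{prop:fabgaussianprop}. By the definition \eqref{eq:FABdelta} of $\calR^{\FABPPI}_\delta$, the event $\{\Delta_\theta\in\calR^{\FABPPI}_\delta(\widehat\Delta_\theta;\widehat\sigma_\theta)\}$ is equivalent to
\[
-z_{1-\delta(1-w_\delta(\Delta_\theta;\widehat\sigma_\theta))} \le Z_n \le z_{1-\delta w_\delta(\Delta_\theta;\widehat\sigma_\theta)},
\]
so the task reduces to analysing the random bounds and combining with $Z_n\to\Normal(0,1)$.

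Because the prior is scaled by $\widehat\sigma_\theta$, property \eqref{eq:propw2} gives $w_\delta(\Delta_\theta;\widehat\sigma_\theta) = w_\delta(\Delta_\theta/\widehat\sigma_\theta;1)$. Under \cref{assump:clt}, $\widehat\sigma_\theta\to 0$ almost surely, as it estimates $\sqrt{\var(\widehat\Delta_\theta)}$ which vanishes as $\min(n,N)\to\infty$. I would therefore split into three cases according to the sign of $\Delta_\theta$. If $\Delta_\theta=0$, the symmetry identity \eqref{eq:propw1} gives $w_\delta\equiv 1/2$, so the event collapses to the standard two-sided test $|Z_n|\le z_{1-\delta/2}$, whose probability tends to $1-\delta$ by the CLT. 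If $\Delta_\theta>0$, then $\Delta_\theta/\widehat\sigma_\theta\to+\infty$ a.s., and \cref{prop:fabgaussianprop} yields $w_\delta(\Delta_\theta;\widehat\sigma_\theta)\to 1$ a.s.; the upper bound converges to $z_{1-\delta}$ while the lower bound diverges to $-\infty$. Writing the coverage as $\Pr(Z_n\le U_n) - \Pr(Z_n<L_n)$, Slutsky's theorem gives the first term $\to 1-\delta$ and tightness of $Z_n$ gives the second $\to 0$, hence coverage $\to 1-\delta$. The case $\Delta_\theta<0$ is symmetric.

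The main technical obstacle is handling the divergent bound in the non-null case: textbook Slutsky applies only when the auxiliary limits are finite and deterministic, whereas here one of the $z$-quantiles escapes to $\pm\infty$. I would circumvent this by decomposing the two-sided event into two one-sided events as above, so that almost-sure divergence of one bound only needs tightness of $Z_n$, not joint weak convergence. Aggregating the three cases then delivers $\liminf_{\min(n,N)\to\infty}\Pr(\Delta_\theta\in\calR^{\FABPPI}_\delta)\ge 1-\delta$, which is the claim of the lemma.
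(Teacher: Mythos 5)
Your proposal is correct and follows essentially the same route as the paper's proof: a case split on the sign of $\Delta_\theta$, the rescaling $w_\delta(\Delta_\theta;\widehat\sigma_\theta)=w_\delta(\Delta_\theta/\widehat\sigma_\theta;1)$ together with $\widehat\sigma_\theta\to 0$ and \cref{prop:fabgaussianprop} to send the spending function to $1$ (resp.\ $0$, $1/2$), and the CLT for the pivot; the only difference is that you dispatch the divergent quantile by splitting the two-sided event into one-sided events and invoking tightness, whereas the paper uses the Gaussian-specific identity $\sigma z_{1-\delta(1-w_\delta(x;\sigma))}=2x+\sigma z_{1-\delta w_\delta(x;\sigma)}$ to build an explicit inner set, a purely cosmetic distinction. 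One small slip: reading \eqref{eq:FABdelta}, coverage of $\Delta_\theta$ is the event $-z_{1-\delta w_\delta}\le Z_n\le z_{1-\delta(1-w_\delta)}$, i.e.\ you have swapped which quantile bounds which side, so for $\Delta_\theta>0$ it is the upper bound that diverges to $+\infty$ and the lower bound that tends to $-z_{1-\delta}$; by symmetry of the normal limit this does not change the limiting probability $1-\delta$, and your argument goes through verbatim after the relabelling.
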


\begin{proof}
Using \cref{prop:fabgaussianprop}, for any $x>0$, $\sigma>0$,
\begin{align*}
\frac{2x}{\sigma}&=g_\delta(g^{-1}_\delta(2x/\sigma))\\
&=g_\delta(w_\delta(x;\sigma))\\
&=z_{1-\delta(1-w_\delta(x;\sigma))}-z_{1-\delta w_\delta(x;\sigma)}
\end{align*}
and
\begin{align*}
\sigma z_{1-\delta(1-w_\delta(x; \sigma))}&=\sigma z_{1-\delta(1-w_\delta(x/\sigma; 1))}\\
&= 2x + \sigma z_{1-\delta w_\delta(x/\sigma;1)}.
\end{align*}
The FAB confidence region \eqref{eq:FABdelta} can therefore be written as
\begin{align*}
    \calR^{\FABPPI}_\delta(\widehat{\Delta}_\theta; \widehat{\sigma}_\theta )&=\{\Delta_\theta>0  \mid \Delta_\theta -\widehat{\sigma}_\theta z_{1-\delta w_\delta(\Delta_\theta; \widehat{\sigma}_\theta)} \leq \widehat\Delta_\theta \leq 3\Delta_\theta + \widehat{\sigma}_\theta z_{1-\delta w_\delta(\Delta_\theta; \widehat{\sigma}_\theta)}\}\\
    &~~~\cup
    \{\Delta_\theta<0  \mid 3\Delta_\theta -\widehat{\sigma}_\theta z_{1-\delta (1-w_\delta(\Delta_\theta; \widehat{\sigma}_\theta))}\leq \widehat\Delta_\theta \leq \Delta_\theta + \widehat{\sigma}_\theta z_{1-\delta(1-w_\delta(\Delta_\theta; \widehat{\sigma}_\theta))}\}\\
    &~~~\cup\{0 \mid  |\widehat\Delta_\theta| \leq \widehat{\sigma}_\theta z_{1-\delta/2} \}.
\end{align*}
Consider first $\Delta_\theta=0$.
By the CLT, $\Pr(0 \in \calR^{\FABPPI}_\delta(\widehat{\Delta}_\theta; \widehat{\sigma}_\theta )\mid \Delta_\theta=0)=\Pr (|\widehat\Delta_\theta| \leq \widehat{\sigma}_\theta z_{1-\delta/2}\mid \Delta_\theta=0) \to 1-\delta$ as $\min(n,N) \to \infty$.
Additionally, for any $x>0$,
\begin{align*}
z_{1-\delta w_\delta(x; \widehat{\sigma}_\theta)}\to z_{1-\delta}\\
z_{1-\delta (1-w_\delta(-x; \widehat{\sigma}_\theta))}\to z_{1-\delta}
\end{align*}
almost surely as $\min(n,N)\to\infty$.

It follows from \cref{eq:FABdelta} that, for any $\epsilon\in(0,z_{1-\delta})$, there exist $N_0$ such that for all $n,N$ with $\min(n,N)\geq N_0$, the FAB confidence region $\calR^{\FABPPI}_\delta(\widehat{\Delta}_\theta; \widehat{\sigma}_\theta )$ contains the set
\begin{align*}
    \mathcal S_\delta(\widehat{\Delta}_\theta; \widehat{\sigma}_\theta )&=\left \{\Delta_\theta>0 \mid \Delta_\theta - \widehat{\sigma}_\theta (z_{1-\delta}-\epsilon) \leq \widehat \Delta_\theta \leq 3\Delta_\theta +\widehat{\sigma}_\theta (z_{1-\delta}-\epsilon) \right\}\\
    &~~~\cup
    \left \{\Delta_\theta<0 \mid 3\Delta_\theta -\widehat{\sigma}_\theta (z_{1-\delta}-\epsilon) \leq \widehat \Delta_\theta \leq \Delta_\theta - \widehat{\sigma}_\theta (z_{1-\delta}-\epsilon) \right\}\cup\{0 \mid  |\widehat\Delta_\theta| \leq \widehat{\sigma}_\theta z_{1-\delta/2} \}.
\end{align*}
For any fixed $\Delta_\theta>0$,
\begin{align}
    \Pr(\Delta_\theta\in \mathcal S_\delta(\widehat{\Delta}_\theta; \widehat{\sigma}_\theta ))=\Pr\left(-(z_{1-\delta}-\epsilon) \leq \frac{\widehat\Delta_\theta-\Delta_\theta}{\widehat\sigma_\theta} \leq \frac{2\Delta_\theta}{\widehat\sigma_\theta} +z_{1-\delta}-\epsilon \right).
\end{align}
Noting that $\frac{2\Delta_\theta}{\widehat\sigma_\theta} +z_{1-\delta}-\epsilon\to\infty$ a.s. as $\min(n,N)\to\infty$, we obtain that
\begin{align}
    {\lim\inf}_{\min(n,N)\to\infty}\Pr(\Delta_\theta \in \calR^{\FABPPI}_\delta(\widehat{\Delta}_\theta; \widehat{\sigma}_\theta )\mid \Delta_\theta)
    \geq {\lim\inf}_{\min(n,N)\to\infty}\Pr(\Delta_\theta \in \mathcal S_\delta(\widehat{\Delta}_\theta; \widehat{\sigma}_\theta )\mid \Delta_\theta)
    \geq 1-\delta.
    \end{align}
The proof proceeds similarly for $\Delta_\theta<0$. 
\end{proof}

\begin{lemma}\label{lemma:fabcr_horseshoe_convergence}
    Let $\widehat{\Delta}_\theta$ be a consistent estimator of $\Delta_\theta$ such that a CLT holds for $\widehat{\Delta}_\theta$, i.e.
    \begin{equation*}
        \frac{\widehat{\Delta}_\theta - \Delta_\theta}{\widehat{\sigma}_\theta } \to \mathcal{N}(0, 1)
    \end{equation*}
    as $\min(n,N) \to \infty$, where $\widehat{\sigma}_\theta^2 / \var(\widehat{\Delta}_\theta)\to 1$ almost surely.
    Let the prior $\pi_{0}(\cdot; \widehat{\sigma}_\theta)$ on $\Delta_\theta$ be the horseshoe prior \eqref{eq:horseshoeprior} with scale parameter $\widehat{\sigma}_\theta$ and consider the corresponding $1 - \delta$ FAB confidence region
    \begin{equation*}
        \calR^{\FABPPI}_\delta(\widehat{\Delta}_\theta; \widehat{\sigma}_\theta ) = \text{FAB-CR}\left(\widehat{\Delta}_\theta; \pi_{0}\left(\cdot~; \widehat{\sigma}_\theta\right), \widehat{\sigma}_\theta, \delta\right),
    \end{equation*}
    where $w_\delta(\Delta_\theta; \widehat{\sigma}_\theta)$ is the associated weight function.
    Then, for $\Delta_\theta \neq 0$, the confidence region $\calR^{\FABPPI}_\delta(\widehat{\Delta}_\theta; \widehat{\sigma}_\theta )$  reverts to the classical $1 - \delta$ $z$-interval for $\Delta_\theta$, i.e., almost surely,
    \begin{equation*}
        \lim_{\min(n,N) \to \infty} \frac{\calR^{\FABPPI}_\delta(\widehat{\Delta}_\theta; \widehat{\sigma}_\theta ) - \widehat{\Delta}_\theta}{\widehat{\sigma}_\theta} = [-z_{1 - \delta/2}, z_{1 - \delta/2}],
    \end{equation*}
    where the convergence is with respect to the Hausdorff distance on closed subsets of $\mathbb{R}$.
    Moreover, for any $\Delta_\theta \in \mathbb{R}$,
    \begin{align}
    {\lim}_{\min(n,N)\to\infty}\Pr(\Delta_\theta \in \calR^{\FABPPI}_\delta(\widehat{\Delta}_\theta; \widehat{\sigma}_\theta )\mid \Delta_\theta)=1-\delta.\label{eq:coverageDelta}
    \end{align}
\end{lemma}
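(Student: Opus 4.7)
The strategy is to exploit the scale invariance \eqref{eq:propw2} of the spending function and reduce the problem to the limit of $w_\delta(\cdot;1)$ at $\pm\infty$, which equals $1/2$ by \cref{prop:horseshoefab}, combined with the fact that $w_\delta(\cdot;1)$ is globally bounded away from $0$ and $1$. Writing any element of $\calR^{\FABPPI}_\delta$ as $\Delta = \widehat\Delta_\theta + \widehat\sigma_\theta u$, \eqref{eq:FABdelta} and \eqref{eq:propw2} give
\begin{equation*}
    (\calR^{\FABPPI}_\delta - \widehat\Delta_\theta)/\widehat\sigma_\theta = \{u \in \bbR \mid -z_{1-\delta(1-w_u)} \leq u \leq z_{1-\delta w_u}\},
\end{equation*}
with $w_u := w_\delta(\widehat\Delta_\theta/\widehat\sigma_\theta + u;1)$. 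By \cref{prop:horseshoefab}, $w_u \in [c, 1-c]$ for some $c \in (0, 1/2)$, so the rescaled region is always contained in a fixed compact interval $[-M, M]$, which will make the Hausdorff statement meaningful.

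For the Hausdorff limit when $\Delta_\theta \neq 0$, the almost-sure consistency of $\widehat\Delta_\theta$ together with $\widehat\sigma_\theta \asto 0$ (which follows from the CLT and the usual $\var(\widehat\Delta_\theta) \to 0$ behaviour in PPI settings) imply $\widehat\Delta_\theta/\widehat\sigma_\theta \asto \pm \infty$. Using continuity of $w_\delta(\cdot;1)$ and its limit at $\pm\infty$, one obtains $\sup_{|u| \leq M} |w_u - 1/2| \asto 0$. Both bounds of the rescaled CR therefore converge uniformly on $[-M,M]$ to $\pm z_{1-\delta/2}$, and a standard two-sided containment argument (inner: any $u$ with $|u|\leq z_{1-\delta/2}-\epsilon$ satisfies both inequalities eventually; outer: any $u$ in the rescaled region satisfies $|u|\leq z_{1-\delta/2}+\epsilon$ eventually) yields $\dHaus((\calR^{\FABPPI}_\delta - \widehat\Delta_\theta)/\widehat\sigma_\theta, [-z_{1-\delta/2}, z_{1-\delta/2}]) \asto 0$.

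For the coverage claim, substituting $T_n := (\widehat\Delta_\theta - \Delta_\theta)/\widehat\sigma_\theta$, the event $\{\Delta_\theta \in \calR^{\FABPPI}_\delta\}$ becomes $-z_{1-\delta \widetilde w} \leq T_n \leq z_{1-\delta(1-\widetilde w)}$, where $\widetilde w := w_\delta(\Delta_\theta/\widehat\sigma_\theta;1)$. If $\Delta_\theta \neq 0$, then $\widetilde w \asto 1/2$ by the argument above; if $\Delta_\theta = 0$, the symmetry relation \eqref{eq:propw1} of the horseshoe yields $w_\delta(0;1) = 1/2$ exactly. In both cases the bounds converge to $\pm z_{1-\delta/2}$, and Slutsky's theorem applied to the CLT $T_n \to \Normal(0,1)$ gives $\Pr(\Delta_\theta \in \calR^{\FABPPI}_\delta) \to 1 - \delta$. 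The main technical step is the uniform convergence $\sup_{|u|\leq M}|w_u - 1/2| \to 0$ on the fixed compact $[-M,M]$: it follows from continuity of the spending function and the pointwise horseshoe limit, but it is the ingredient that must be checked carefully to pin down the Hausdorff limit rather than merely a pointwise one.
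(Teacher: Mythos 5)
Your proof is correct and follows essentially the same route as the paper's: rescale the region via the scale invariance of the spending function, use that $w_\delta(\cdot;1)$ is bounded away from $0$ and $1$ to confine the rescaled region, establish uniform convergence of $w_\delta(\psi+\widehat\Delta_\theta/\widehat\sigma_\theta;1)$ to $1/2$ over that set as $\widehat\Delta_\theta/\widehat\sigma_\theta\to\pm\infty$, and conclude by a two-sided containment argument together with the CLT. The only cosmetic differences are that you trap the rescaled region in a fixed compact $[-M,M]$ where the paper works with a one-sided bound and the half-line $[M,\infty)$, and that you prove the coverage claim for $\Delta_\theta\neq 0$ directly via Slutsky's theorem rather than reading it off the Hausdorff limit.
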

\begin{proof}
    Consider the case $\Delta_\theta \neq 0$.
    As described in \cref{sec:app:backgroundFAB}, the spending function $w_\delta$ is continuous and satisfies, for any $z\in\bbR$ and $\sigma>0$,
    \begin{equation}
        w_\delta\left(z; \sigma \right) = w_\delta\left(\frac{z}{\sigma}; 1\right). \label{eq:w_rescale}
    \end{equation}
    Moreover, by \cref{prop:horseshoefab}, $w_\delta(\cdot; 1)$ takes values in $(0, 1)$ and satisfies
    \begin{equation*}
        \lim_{z\to\infty} w_\delta(z;1)=\lim_{z\to -\infty} w_\delta(z;1)=w_\delta(0;1)=\frac{1}{2}.
    \end{equation*}

    Define $A(p) = -\Phi^{-1}(1 - \delta (1 - p))$ and $B(p) = \Phi^{-1}(1 - \delta p)$, where $\Phi(\cdot)$ is the CDF of the standard normal distribution.
    Then, from \cref{eq:FABdelta}, we have that
    \begin{align*}
        \frac{\calR^{\FABPPI}_\delta(\widehat{\Delta}_\theta; \widehat{\sigma}_\theta ) - \widehat{\Delta}_\theta}{\widehat{\sigma}_\theta} &= \left\{\psi \in \bbR  \mid A(w_\delta(\widehat{\sigma}_\theta \psi + \widehat{\Delta}_\theta; \widehat{\sigma}_\theta)) \leq \psi \leq B(w_\delta(\widehat{\sigma}_\theta \psi + \widehat{\Delta}_\theta; \widehat{\sigma}_\theta))\right\} \\
        &= \left\{\psi \in \bbR  \mid A(w_\delta(\psi + \widehat{\Delta}_\theta/\widehat{\sigma}_\theta; 1)) \leq \psi \leq B(w_\delta(\psi + \widehat{\Delta}_\theta/\widehat{\sigma}_\theta ; 1))\right\} \\
        &=: \calC_{n,N},
    \end{align*}
    where the second equality follows from \cref{eq:w_rescale}.

    Assume that $\Delta_\theta > 0$, which ensures $\widehat{\Delta}_\theta / \widehat{\sigma}_\theta \to \infty$ almost surely as $\min(n,N) \to \infty$.
    The case $\Delta_\theta < 0$ follows similarly.

    First, we show that there exists an $M$ independent of $n,N$ such that, for all $n,N \geq 1$, if $\psi \in \calC_{n,N}$, then $\psi > M$. By the boundedness of $w_\delta(\cdot; 1)$, there exists $\kappa\in(0,\frac{1}{2})$ such that $w_\delta(x;1)\in[\kappa,1-\kappa]$ for all $x\in\bbR$.
    Since $A(\cdot)$ is decreasing,
    \begin{equation*}
        A(w_{\delta}(\psi + \widehat{\Delta}_{\theta} / \widehat{\sigma}_{\theta};1))\geq A(1-\kappa):=c > -\infty
    \end{equation*}
    for all $\psi \in \mathbb{R}$, $n, N \geq 1$.
    Pick $M<c$.
    For all $\psi \leq M$ we have, for all $n,N\geq1$, $\psi \leq M < c \leq A(w_{\delta}(\psi+\widehat{\Delta}_{\theta}/\widehat{\sigma}_{\theta};1))$.
    Hence, $\psi\notin \calC_{n,N}$.
    As a result, $\calC_{n,N} \cap [M, \infty) = \calC_{n,N}$.

    Second, we show that, almost surely, $w_\delta(\psi + \widehat{\Delta}_\theta/\widehat{\sigma}_\theta; 1)$ converges to $1/2$ uniformly on $[M, \infty)$ as $\min(n,N) \to \infty$.
    Given that $w_{\delta}(\cdot;1)$ is bounded and converges pointwise to $1/2$, we have that
    \begin{equation*}
        g(t) := \sup_{y \geq t} \left\vert w_{\delta}(y;1)-\frac{1}{2} \right\vert \to 0
    \end{equation*}
    as $t \to \infty$, where $g(\cdot)$ is continuous and nonincreasing.
    As a result of this, and since $\widehat{\Delta}_\theta / \widehat{\sigma}_{\theta}\rightarrow\infty$ almost surely as $\min(n,N) \to \infty$, we have that
    \begin{equation}
        \sup_{\psi \geq M}\left\vert w_{\delta}(\psi+\widehat{\Delta}_{\theta}/\widehat{\sigma}_{\theta};1)-\frac{1}{2}\right\vert = g\left(\widehat{\Delta}_{\theta}/\widehat{\sigma}_{\theta}+M\right)\to 0 \label{eq:w_conv_unif}
    \end{equation}
    almost surely as $\min(n,N) \to \infty$.    

    Lastly, we combine the previous two steps to show the almost sure Hausdorff convergence of $\calC_{n, N}$.
    The functions $A(\cdot)$ and $B(\cdot)$ are continuous with $A(1/2)=-z_{1-\delta/2}$ and $B(1/2)=z_{1-\delta/2}$.
    Hence, for every $\epsilon\in(0, z_{1-\delta/2})$, there exists $\eta>0$ such that
    \begin{equation*}
        \left\vert p-\frac{1}{2}\right\vert \leq\eta\Rightarrow\left\vert A(p)+z_{1-\delta/2}\right\vert \leq \epsilon\text{ and }\left\vert B(p)-z_{1-\delta/2}\right\vert \leq\epsilon
    \end{equation*}

    By \cref{eq:w_conv_unif}, there exists $N_1$ such that, for all $\min(n,N) \geq N_1$,
    \begin{equation*}
        \sup_{\psi\geq M}\left\vert w_{\delta}(\psi+\widehat{\Delta}_{\theta} / \widehat{\sigma}_{\theta};1)-\frac{1}{2}\right\vert \leq\eta.
    \end{equation*}
    Then, uniformly for $\psi\geq M$,
    \begin{align*}
        A(w_{\delta}(\psi+\widehat{\Delta}_{\theta}/\widehat{\sigma}_{\theta};1))  & \in[-z_{1-\delta/2}-\epsilon,-z_{1-\delta/2}+\epsilon] \\
        B(w_{\delta}(\psi+\widehat{\Delta}_{\theta}/\widehat{\sigma}_{\theta};1))  & \in[z_{1-\delta/2}-\epsilon,z_{1-\delta/2}+\epsilon]
    \end{align*}
    and, for $\min(n,N)\geq N_1,$
    \begin{equation*}
        \left[  -z_{1-\delta/2}+\epsilon,z_{1-\delta/2}-\epsilon\right]  \subseteq \calC_{n,N}\cap[M,\infty)\subseteq [-z_{1-\delta/2}-\epsilon,z_{1-\delta/2}+\epsilon],
    \end{equation*}
    which, combined with the first step, gives the desired result.
    Moreover, \cref{eq:coverageDelta} then follows directly, which completes the proof.

    On the other hand, in the case $\Delta_\theta = 0$, asymptotic coverage follows like in the proof of \cref{lemma:coverage_gaussian_delta}.
    In  particular, from \cref{eq:FABdelta}, we have that
    \begin{equation*}
        \Pr(0 \in \calR^{\FABPPI}_\delta(\widehat{\Delta}_\theta; \widehat{\sigma}_\theta ) | \Delta_\theta = 0) = \Pr(|\widehat{\Delta}_\theta| \leq \widehat{\sigma}_\theta z_{1 - \delta / 2})
    \end{equation*}
    thanks to the fact $w_\delta(0; \sigma) = 1/2$, and the result follows from the CLT as $\min(n,N) \to \infty$.
    While this is not necessary to show asymptotic coverage, it is interesting to note that $\calC_{n,N}$ does not converge to a deterministic limit almost surely when $\Delta_\theta = 0$.
    Instead, by showing continuity of the mapping $\widehat{\Delta}_\theta/\widehat{\sigma}_\theta \mapsto \calC_{n,N}(\widehat{\Delta}_\theta/\widehat{\sigma}_\theta)$ and again exploiting the CLT, one may prove that $\calC_{n,N}$ converges in distribution to the random FAB confidence region $\calC(Y) - Y$, where $\calC(y)$ is defined in \cref{eq:fabC}, for a unit scale and a horseshoe prior, and $Y \sim \mathcal{N}(0, 1)$.
\end{proof}

With the above two lemmas, we can now prove \cref{thm:fabppi_coverage}, which we restate here in extended form.
\begin{theorem}
    Consider a convex estimation problem whose solution can be expressed as in \cref{eq:defthetastar2}. For all $\theta \in \mathbb{R}$, define $\widehat{\Delta}_\theta$ and $\widehat{m}_\theta$ as in \cref{sec:fab-ppi} and let
    \begin{align*}
       \calR^{\FABPPI}_\delta(\widehat{\Delta}_\theta; \widehat{\sigma}_\theta ) &= \text{FAB-CR}\left(\widehat{\Delta}_\theta; \pi_{0}\left(\cdot~; \widehat{\sigma}_\theta\right), \widehat{\sigma}_\theta, \delta\right),\nonumber\\
        \calT_{\alpha-\delta}(\widehat m_\theta; \widehat\sigma_\theta^f) &= \left[\widehat{m}_\theta \pm \widehat\sigma_\theta^f z_{1-(\alpha - \delta) / 2}\right],
    \end{align*}
    where $\frac{\widehat{\sigma}_\theta^2}{\var(\hat{\Delta}_\theta)}\to 1$ and $\frac{(\widehat\sigma_\theta^f)^2}{\var(\hat{m}_\theta)}\to 1$ almost surely as $\min(n,N)\to\infty$. Then, the FAB-PPI confidence region $\calC_\alpha^\FABPPI$, defined as
    \begin{equation*}
        \calC_\alpha^\FABPPI=\left\{\theta \mid 0\in \calR^{\FABPPI}_\delta(\widehat\Delta_\theta; \widehat{\sigma}_\theta) + \calT_{\alpha-\delta}(\widehat m_\theta; \widehat\sigma_\theta^f)\right\},
    \end{equation*}
    has correct asymptotic coverage, i.e.~it satisfies
    \begin{equation*}
        \liminf_{\min(n,N) \to \infty} \Pr(\theta^\star \in \calC_\alpha^\FABPPI) = \liminf_{\min(n,N) \to \infty} \Pr(0 \in \calR^{\FABPPI}_\delta(\widehat\Delta_{\theta^\star}; \widehat{\sigma}_{\theta^\star}) + \calT_{\alpha-\delta}(\widehat m_{\theta^\star}; \widehat\sigma_{\theta^\star}^f)) \geq 1 - \alpha.
    \end{equation*}
\end{theorem}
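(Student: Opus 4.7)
The plan is to reduce the coverage statement for $\calC_\alpha^\FABPPI$ to the separate coverage statements for the rectifier $\Delta_{\theta^\star}$ and the measure-of-fit $m_{\theta^\star}$, and then to combine them via a union bound, exactly mirroring the standard argument for the original PPI confidence region.

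First, I would rewrite the target event. Since $m_{\theta^\star}+\Delta_{\theta^\star}=0$ by \cref{eq:defthetastar2}, the inclusion $\Delta_{\theta^\star}\in \calR^{\FABPPI}_\delta(\widehat\Delta_{\theta^\star};\widehat\sigma_{\theta^\star})$ together with $m_{\theta^\star}\in \calT_{\alpha-\delta}(\widehat m_{\theta^\star};\widehat\sigma_{\theta^\star}^f)$ implies
\begin{equation*}
0=\Delta_{\theta^\star}+m_{\theta^\star}\in \calR^{\FABPPI}_\delta(\widehat\Delta_{\theta^\star};\widehat\sigma_{\theta^\star})+\calT_{\alpha-\delta}(\widehat m_{\theta^\star};\widehat\sigma_{\theta^\star}^f),
\end{equation*}
i.e., $\theta^\star\in \calC_\alpha^\FABPPI$. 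Taking complements, $\{\theta^\star\notin \calC_\alpha^\FABPPI\}\subseteq \{\Delta_{\theta^\star}\notin \calR^{\FABPPI}_\delta\}\cup\{m_{\theta^\star}\notin \calT_{\alpha-\delta}\}$, so a union bound gives
\begin{equation*}
\Pr(\theta^\star\notin \calC_\alpha^\FABPPI)\le \Pr(\Delta_{\theta^\star}\notin \calR^{\FABPPI}_\delta)+\Pr(m_{\theta^\star}\notin \calT_{\alpha-\delta}).
\end{equation*}

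Second, I would bound the two right-hand side terms asymptotically. For the rectifier term, both priors under consideration are already handled in the excerpt: \cref{lemma:coverage_gaussian_delta} treats the Gaussian case and \cref{lemma:fabcr_horseshoe_convergence} treats the horseshoe case, and both yield
\begin{equation*}
\liminf_{\min(n,N)\to\infty}\Pr(\Delta_{\theta^\star}\in \calR^{\FABPPI}_\delta(\widehat\Delta_{\theta^\star};\widehat\sigma_{\theta^\star}))\ge 1-\delta.
\end{equation*}
For the measure-of-fit term, the CLT part of \cref{assump:clt} together with Slutsky's theorem applied to the ratio $(\widehat m_{\theta^\star}-m_{\theta^\star})/\widehat\sigma_{\theta^\star}^f$ gives, for the standard $z$-interval $\calT_{\alpha-\delta}(\widehat m_{\theta^\star};\widehat\sigma_{\theta^\star}^f)=[\widehat m_{\theta^\star}\pm \widehat\sigma_{\theta^\star}^f z_{1-(\alpha-\delta)/2}]$,
\begin{equation*}
\lim_{\min(n,N)\to\infty}\Pr(m_{\theta^\star}\in \calT_{\alpha-\delta}(\widehat m_{\theta^\star};\widehat\sigma_{\theta^\star}^f))=1-(\alpha-\delta).
\end{equation*}

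Combining these two displays and using superadditivity of the $\limsup$ of a sum yields
\begin{equation*}
\limsup_{\min(n,N)\to\infty}\Pr(\theta^\star\notin \calC_\alpha^\FABPPI)\le \delta+(\alpha-\delta)=\alpha,
\end{equation*}
which is equivalent to the desired conclusion $\liminf \Pr(\theta^\star\in\calC_\alpha^\FABPPI)\ge 1-\alpha$.

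The only nontrivial ingredient is the rectifier coverage, but that work has been done in the two prior-specific lemmas; the rest is the containment of events together with a union bound. Hence, the hard part is already encapsulated in \cref{lemma:coverage_gaussian_delta} and \cref{lemma:fabcr_horseshoe_convergence}, and the proof of the theorem itself reduces to the short argument sketched above.
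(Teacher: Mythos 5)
Your proposal is correct and follows essentially the same route as the paper's proof: both reduce the claim to the rectifier coverage established in \cref{lemma:coverage_gaussian_delta} and \cref{lemma:fabcr_horseshoe_convergence}, the CLT-based coverage of $\calT_{\alpha-\delta}$, and a union bound, with the Minkowski-sum containment following from $\Delta_{\theta^\star}+m_{\theta^\star}=0$. The only difference is presentational (you work with complements directly, the paper defines the intersection event $E$ and applies Boole's inequality), which is immaterial.
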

\begin{proof}
    By \cref{lemma:coverage_gaussian_delta} (Gaussian prior) and \cref{lemma:fabcr_horseshoe_convergence} (horseshoe prior), $\calR^{\FABPPI}_\delta(\widehat\Delta_{\theta^\star}; \widehat{\sigma}_{\theta^\star})$  is an asymptotically valid $1-\delta$ confidence region for $\Delta_{\theta^\star}$, that is
    \begin{align*}
        \liminf_{\min(n, N) \to \infty} \text{Pr}(\Delta_{\theta^\star} \in \calR^{\FABPPI}_\delta(\widehat\Delta_{\theta^\star}; \widehat{\sigma}_{\theta^\star})) &\geq 1 - \delta.
    \end{align*}
    Similarly, the CLT for $\widehat m_\theta$ implies that
    \begin{align*}
        \liminf_{\min(n, N) \to \infty} \text{Pr}(m_{\theta^\star} \in \calT_{\alpha-\delta}(\widehat m_{\theta^\star};\widehat\sigma_{\theta^\star}^f)) &\geq 1 - (\alpha - \delta).
    \end{align*}
    Consider the event
    \begin{equation*}
        E = \{\Delta_{\theta^\star} \in \calR^{\FABPPI}_\delta(\widehat\Delta_{\theta^\star}; \widehat{\sigma}_{\theta^\star})\} \cap \{m_{\theta^\star} \in \calT_{\alpha-\delta}(\widehat m_{\theta^\star};\widehat\sigma_{\theta^\star}^f)\}.
    \end{equation*}
    By Boole's inequality,
    \begin{align*}
        \liminf_{n,N\to\infty} \mathrm{Pr}(E) &\geq 1 - \limsup_{\min(n,N)\to\infty}\mathrm{Pr}(\{\Delta_{\theta^\star} \notin \calR^{\FABPPI}_\delta(\widehat\Delta_{\theta^\star}; \widehat{\sigma}_{\theta^\star})\} \cup \{m_{\theta^\star} \notin \calT_{\alpha-\delta}(\widehat m_{\theta^\star};\widehat\sigma_{\theta^\star}^f)\}) \\
        &\geq 1 - \limsup_{\min(n,N)}\mathrm{Pr}(\{\Delta_{\theta^\star} \notin \calR^{\FABPPI}_\delta(\widehat\Delta_{\theta^\star}; \widehat{\sigma}_{\theta^\star})\}) - \limsup_{N\to\infty} \mathrm{Pr}(\{m_{\theta^\star} \notin \calT_{\alpha-\delta}(\widehat m_{\theta^\star};\widehat\sigma_{\theta^\star}^f)\}) \\
        &\geq 1 - \delta - (\alpha - \delta) \\
        &= 1 - \alpha.
    \end{align*}
    Furthermore, on the event $E$, we have that
    \begin{equation*}
        0 = \Delta_{\theta^\star} + m_{\theta^\star} \in \calR^{\FABPPI}_\delta(\widehat\Delta_{\theta^\star}; \widehat{\sigma}_{\theta^\star}) + \calT_{\alpha-\delta}(\widehat m_{\theta^\star};\widehat\sigma_{\theta^\star}^f),
    \end{equation*}
    where the first equality follows from \cref{eq:defthetastar2}. As a result of this,
    \begin{equation*}
        \liminf_{\min(n,N) \to \infty} \text{Pr}(0 \in \calR^{\FABPPI}_\delta(\widehat\Delta_{\theta^\star}; \widehat{\sigma}_{\theta^\star}) + \calT_{\alpha-\delta}(\widehat m_{\theta^\star};\widehat\sigma_{\theta^\star}^f)) \geq 1 - \alpha,
    \end{equation*}
    as desired.
\end{proof}

\begin{remark}\label{rem:fabppi_asymptotic}
With both the Gaussian and horseshoe priors, we obtain asymptotic coverage. However, the asymptotic confidence regions differ significantly. In the Gaussian case, the volume of the confidence region does not vanish asymptotically. Instead, the confidence region converges to $(\frac{\Delta_\theta}{3},\Delta_\theta)$, with volume of $\frac{2}{3}|\Delta_\theta|$.
In contrast, when using the horseshoe prior \eqref{eq:horseshoeprior}, we revert to the usual CLT-based confidence intervals, and the volume of the confidence region converges to zero almost surely.
\end{remark}

\subsection{\cref{prop:fabppi_robustness} - Robustness of FAB-PPI under the Horseshoe Prior}
\label{app:thm:fabppi_robustness}

Let $\pi_0$ be the horseshoe prior \eqref{eq:horseshoeprior}, and consider the FAB confidence region $\calR^{\FABPPI}_\delta(\widehat{\Delta}_\theta; \widehat{\sigma}_\theta )$ for $\Delta_\theta$, as defined in \cref{eq:FABdelta}.

We first state a corollary of \citet[Theorem~3.4]{Cortinovis2024}, which follows from the power-law tails of the marginal likelihood under the horseshoe prior (see \cref{sec:app:backgroundhorseshoe}). The corollary states that, if $|\widehat{\Delta}_\theta|$ is very large, then the standard CLT-based confidence interval, $[\widehat{\Delta}_\theta\pm \widehat{\sigma}_\theta z_{1-\delta/2}]$, is recovered.

\begin{corollary}(\citet[Theorem 3.4]{Cortinovis2024})
\label{prop:limitstandardCI}
For any $\sigma>0$,
    \begin{align*}
        \lim_{\Delta \to \pm \infty} \calR^{\FABPPI}_\delta(\Delta; \sigma ) - \Delta = [-\sigma z_{1 - \delta/2}, \sigma z_{1 - \delta/2}],
    \end{align*}
    where the convergence is with respect to the Hausdorff distance on closed subsets of $\mathbb{R}$.
\end{corollary}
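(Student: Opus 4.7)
The plan is to treat the statement as a concrete instantiation of \citet[Theorem~3.4]{Cortinovis2024}, so the real work is to verify its hypotheses for the horseshoe and to unpack the resulting conclusion into the stated form. First, I would use the scale invariance of the spending function, $w_\delta(z;\sigma) = w_\delta(z/\sigma;1)$, to reduce the claim to the $\sigma = 1$ case with $y := \Delta \to \pm\infty$. A direct rearrangement of the two defining inequalities of the FAB region, under the substitution $\psi = \mu - y$, then produces the explicit description
\begin{equation*}
    \calR^{\FABPPI}_\delta(y;1) - y = \{\psi \in \bbR : -z_{1-\delta(1-w_\delta(y+\psi;1))} \leq \psi \leq z_{1-\delta w_\delta(y+\psi;1)}\}.
\end{equation*}

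Next, I would invoke the two consequences of \cref{prop:horseshoefab}, which applies because the horseshoe marginal has power-law tails: (i) $w_\delta(\cdot;1)$ is bounded away from $0$ and $1$, say $w_\delta(\cdot;1) \in [\kappa, 1-\kappa]$ for some $\kappa \in (0, 1/2)$; and (ii) $w_\delta(u;1) \to 1/2$ as $|u|\to\infty$. Property (i) confines the $\psi$-set above to the fixed compact interval $[-M, M]$ with $M = z_{1-\delta\kappa}$, uniformly in $y$. Property (ii), combined with the bounded window, then implies $\sup_{|\psi|\leq M}|w_\delta(y+\psi;1) - 1/2| \to 0$ as $y \to \infty$, because the argument $y + \psi$ is bounded below by $y - M$, which still diverges.

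Finally, by continuity of the quantile map $p \mapsto z_{1-\delta p}$ at $p = 1/2$ (where it equals $z_{1-\delta/2}$), a standard sandwich argument gives that, for every $\epsilon > 0$ and all sufficiently large $y$,
\begin{equation*}
    [-z_{1-\delta/2}+\epsilon, z_{1-\delta/2}-\epsilon] \subseteq \calR^{\FABPPI}_\delta(y;1) - y \subseteq [-z_{1-\delta/2}-\epsilon, z_{1-\delta/2}+\epsilon],
\end{equation*}
which is exactly Hausdorff convergence to $[-z_{1-\delta/2}, z_{1-\delta/2}]$. The $y \to -\infty$ case follows symmetrically via the identity $w_\delta(-u;1) = 1 - w_\delta(u;1)$. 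The only nontrivial step is the uniform convergence of $w_\delta$ on compactly-sized windows that drift to infinity; this hinges entirely on the heavy-tail limit in \cref{prop:horseshoefab}, while the remaining ingredients (scale reduction, set-inclusion sandwich, symmetry) are essentially bookkeeping.
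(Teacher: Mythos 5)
Your proposal is correct and follows essentially the same route the paper takes: while the paper states this result as an imported corollary of \citet[Theorem~3.4]{Cortinovis2024} without a standalone proof, the identical argument (rescale via $w_\delta(z;\sigma)=w_\delta(z/\sigma;1)$, confine the shifted region using the boundedness of $w_\delta$ away from $0$ and $1$, establish uniform convergence of $w_\delta$ to $1/2$ on the drifting window, and conclude by continuity of the quantile maps with a set-inclusion sandwich) is exactly what the paper carries out in the proof of \cref{lemma:fabcr_horseshoe_convergence} for the stochastic analogue. Your two-sided confinement to $[-z_{1-\delta\kappa}, z_{1-\delta\kappa}]$ is a minor (and valid) variant of the paper's one-sided bound $\calC_{n,N}\subseteq[M,\infty)$, and the symmetry reduction for $\Delta\to-\infty$ matches \cref{eq:propw1}.
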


Define
\begin{align}
\mathcal S_{\alpha,\delta}(\widehat\Delta_\theta, \widehat{\sigma}_\theta, \widehat m_\theta, \widehat\sigma_\theta^f)
=\calR^{\FABPPI}_\delta(\widehat\Delta_\theta; \widehat{\sigma}_\theta) + \calT_{\alpha-\delta}(\widehat m_\theta; \widehat\sigma_\theta^f),
\end{align}
where
$$
 \calT_{\alpha-\delta}(\widehat m_\theta; \widehat\sigma_\theta^f)
 = \left[\widehat{m}_\theta - \widehat\sigma_\theta^f z_{1-(\alpha - \delta) / 2}, \widehat{m}_\theta + \widehat\sigma_\theta^f  z_{1-(\alpha - \delta) / 2}\right]
$$
is the standard CLT-based confidence interval for $m_\theta$.
From Corollary \ref{prop:limitstandardCI}, for any fixed $\sigma>0$, $m\in\bbR$, $\sigma^f>0$,
 \begin{align*}
    \lim_{\Delta\to\pm\infty} \mathcal S_{\alpha,\delta}(\Delta, \sigma, m, \sigma^f) - (\Delta+m) &= [-\sigma z_{1-\delta/2} - \sigma^f z_{1-(\alpha-\delta)/2}, \sigma z_{1-\delta/2} + \sigma^f z_{1-(\alpha-\delta)/2}],
\end{align*}
where, again, the convergence is with respect to the Hausdorff distance on closed subsets of $\mathbb{R}$.
Therefore, if $|\widehat{\Delta}_\theta|\gg 0$, the confidence region $\mathcal S_{\alpha,\delta}(\widehat\Delta_\theta, \widehat{\sigma}_\theta, \widehat m_\theta, \widehat\sigma_\theta^f)$ reverts to the standard interval 
$$
[\widehat\Delta_\theta+\widehat m_\theta \pm (\widehat{\sigma}_\theta z_{1-\delta/2}+ \widehat{\sigma}^f_\theta z_{1-(\alpha-\delta)/2}) ].$$
It follows that, if $\inf_{\theta'\in\bbR} |\widehat\Delta_{\theta'}|\gg 0$, the confidence region for $\theta^\star$, defined as
\begin{equation*}
    \calC_\alpha^\FABPPI=\left\{\theta \mid 0\in \calR^{\FABPPI}_\delta(\widehat\Delta_\theta; \widehat{\sigma}_\theta) + \calT_{\alpha-\delta}(\widehat m_\theta; \widehat\sigma_\theta^f)\right\},
\end{equation*}    
reverts to the standard, CLT-based PPI confidence region
\begin{equation*}
    \calC_\alpha^\text{PP}=\left\{\theta\in\bbR \mid  -\widehat{\sigma}_\theta z_{1-\delta/2}- \widehat{\sigma}^f_\theta z_{1-(\alpha-\delta)/2}  \leq \widehat\Delta_\theta+\widehat m_\theta\leq  \widehat{\sigma}_\theta z_{1-\delta/2}+ \widehat{\sigma}^f_\theta z_{1-(\alpha-\delta)/2}           \right\}.
\end{equation*}

\subsection{\cref{prop:fabppi_consistency} - Consistency of FAB-PPI Mean Estimators}
\label{app:prop:fabppi_consistency}
Below, we use $\BPP$ and $\BPPpp$ to distinguish between the estimators $\widehat\theta^\FABPPI$, $\widehat\theta$, $\widehat\Delta$ and $\widehat\sigma$ in the two cases of FAB-PPI and FAB-PPI\texttt{++}. Then, the FAB-PPI and FAB-PPI\texttt{++} mean estimators are given by
\begin{align*}
\widehat \theta^{\BPP} &=  \widehat \theta^{\PP} - (\widehat\sigma^{\PP})^2 \ell'\left(\widehat\Delta^\PP; \widehat\sigma^{\PP}, \widehat\sigma^{\PP}\right),\\
\widehat \theta^{\BPPpp} &=  \widehat \theta^{\PPpp} - (\widehat\sigma^{\PPpp})^2 \ell'\left(\widehat\Delta^\PPpp;\widehat\sigma^{\PPpp},\widehat\sigma^{\PP}\right),
\end{align*}
where we recall that $\ell(y;\sigma,\tau)=\log \int_\bbR \Normal(y;\Delta,\sigma^2)\pi_0(\Delta;\tau)d\Delta$, where $\tau$ is a scale parameter of the prior $\pi_0$. By assumption, both PPI estimators, $\widehat \theta^{\PP}$ and $\widehat \theta^{\PPpp}$, are strongly consistent estimators of $\theta^\star$. It remains to prove that
\begin{align}
    (\widehat\sigma^{\PP})^2 \ell'\left(\widehat\Delta^\PP;\widehat\sigma^{\PP}, \widehat\sigma^{\PP} \right)\to 0\label{eq:fabppicon_conv1}\\
    (\widehat\sigma^{\PPpp})^2 \ell'\left(\widehat\Delta^\PPpp;\widehat\sigma^{\PPpp}, \widehat\sigma^{\PP} \right)\to 0\label{eq:fabppicon_conv2}
    \end{align}
almost surely, as $\min(n,N)\to\infty$. For any $\sigma>0$, we have $\ell'(y;\sigma,\sigma)=\frac{1}{\sigma}\ell'(y/\sigma; 1,1)$.

Under the horseshoe prior \eqref{eq:horseshoeprior}, $\ell_{\text{HS}}'(y; 1,1)$ is bounded.  Therefore, \eqref{eq:fabppicon_conv1} and \eqref{eq:fabppicon_conv2} hold almost surely by sandwiching.

Under the Gaussian prior \eqref{eq:Gaussianprior},
$$
\ell_{\text{N}}'(y;\sigma,\sigma)=-\frac{y}{2\sigma^2}.
$$
Hence, since $\widehat\Delta^{\PP}\to \Delta$ and $\widehat\Delta^{\PPpp}\to \Delta$ almost surely, where we recall that $\Delta=\mathbb E[f(X)-Y]$, we obtain
\begin{align*}
(\widehat\sigma^{\PP})^2 \ell_{\text{N}}'\left(\widehat\Delta^\PP; \widehat\sigma^{\PP},\widehat\sigma^{\PP}\right)\to -\frac{\Delta}{2}\\
(\widehat\sigma^{\PPpp})^2 \ell_{\text{N}}'\left(\widehat\Delta^\PPpp; \widehat\sigma^{\PPpp},\widehat\sigma^{\PP}\right)\to -\frac{\Delta}{2}
        \end{align*}
almost surely as $\min(n,N)\to\infty$, which implies that the FAB-PPI mean estimators under the Gaussian prior \eqref{eq:Gaussianprior} are not consistent.

\section{Multivariate FAB-PPI}
\label{app:multivariate}
Here we extend FAB-PPI to the multivariate case, where $\theta,m_\theta,\Delta_\theta \in \bbR^d$. While most of the methodology remains the same as in the univariate case, we now need to specify a multivariate prior for $\Delta_\theta$, for which we consider independent horseshoe priors on each dimension.

\subsection{Multivariate Bayesian PPI Estimators}
As in the univariate case, we use the sample mean $\widehat{m}_\theta$ as the estimator of $m_\theta$.
Similarly, we consider some consistent estimator $\widehat \Delta_\theta$ of $\Delta_\theta$, such as the sample mean \eqref{eq:deltasamplemean}, as in PPI, or the control variate estimator \eqref{eq:deltahatpowertuning}, as in PPI\texttt{++}.
Crucially, we assume that a multivariate CLT holds for this estimator, that is
\begin{equation*}
    \widehat\Sigma_\theta^{-1/2} \left(\widehat\Delta_\theta -\Delta_\theta\right)\to \Normal(0, \mathrm{I})
\end{equation*}
as $\min(n,N)\to\infty$, where $\widehat\Sigma_\theta$ is an estimator of $\cov(\widehat\Delta_\theta)$. Again, this holds for both the PPI and PPI\texttt{++} estimators~\citep{Angelopoulos2023,Angelopoulos2023a}.
We consider $d$ independent priors, $\pi_0(\Delta_{\theta,k};\widehat\sigma_{\theta,k})$ for $k = 1,\dots,d$, on the components of $\Delta_\theta$, where $\widehat\sigma_{\theta,k}^2$ is the $k$-th diagonal element of $\widehat\Sigma_\theta$. The multivariate FAB-PPI estimator $\widehat\Delta_{\theta}^{\FABPPI}$ is formed by stacking the individual estimators
\begin{align*}
    \widehat\Delta_{\theta,k}^{\FABPPI}= \widehat\Delta_{\theta,k} + \widehat\sigma_{\theta,k}^2 \ell'\left(\widehat\Delta_{\theta, k};\widehat\sigma_{\theta,k},\widehat\sigma_{\theta,k}\right)
\end{align*}
for each dimension $k = 1,\dots,d$. Importantly, note that the $k$-th dimension of $\widehat\Delta_{\theta}^{\FABPPI}$ only depends on the $k$-th dimension of the observed $(\mathcal{L}_\theta'(X_i, Y_i) - \mathcal{L}_\theta'(X_i, f(X_i)))$ that are used to estimate $\Delta_{\theta,k}$.
The FAB-PPI estimator of $\theta^\star$ then becomes the solution, in $\theta$, to the equation
$$
\widehat m_\theta + \widehat\Delta_\theta^{\FABPPI} = \mathbf{0} \in \mathbb{R}^d.
$$

\subsection{Multivariate FAB-PPI Confidence Regions}
As in the univariate case, let $\calT_{\alpha-\delta}(\widehat m_\theta)$ denote a standard $1-(\alpha-\delta)$ confidence interval for $m_\theta$.
For $\Delta_\theta$, we apply the FAB framework with independent horseshoe priors to each dimension $\Delta_{\theta,k}$ and use a union bound to obtain a $1-\delta$ confidence region for $\Delta_\theta$. In particular, let $\calR^{\FABPPI}_{\delta/d}(\widehat\Delta_{\theta,k},\widehat\sigma_{\theta,k})=\text{FAB-CR}(\widehat\Delta_{\theta,k};\pi_0(\cdot~;\widehat\sigma_{\theta,k}),\widehat\sigma_{\theta, k}, \delta/d)$ be a $1 - \delta/d$ FAB confidence region for $\Delta_{\theta,k}$ under the horseshoe prior $\pi_{0}(\cdot~;\widehat\sigma_{\theta,k})$. Then,
\begin{equation*}
    \calR^{\FABPPI}_{\delta}(\widehat\Delta_{\theta},\widehat\sigma_{\theta}) = \left\{\Delta_\theta \mid \Delta_{\theta,k} \in \calR^{\FABPPI}_{\delta/d}(\widehat\Delta_{\theta,k},\widehat\sigma_{\theta,k}),\ k = 1,\dots,d\right\}
\end{equation*}
where $\widehat\Delta_{\theta}=(\widehat\Delta_{\theta,1},\ldots,\widehat\Delta_{\theta,d})$, $\widehat\sigma_{\theta}=(\widehat\sigma_{\theta,1},\ldots,\widehat\sigma_{\theta,d})$,
is a $1-\delta$ multivariate FAB confidence region for $\Delta_\theta$ by a union bound. With this, the multivariate FAB-PPI confidence region $\calC_\alpha^\FABPPI$ is given by
\begin{align*}
    \calC_\alpha^\FABPPI=\left\{\theta \mid \mathbf{0} \in \calR^{\FABPPI}_\delta(\widehat\Delta_\theta,\widehat\sigma_{\theta}) + \calT_{\alpha-\delta}(\widehat m_\theta, \widehat\sigma^f_{\theta})\right\},
\end{align*}
exactly as in the univariate case. Moreover, also multivariate FAB-PPI enjoys asymptotic coverage as $\min(n,N) \to \infty$. In particular, \cref{thm:fabppi_coverage} can be easily extended to the multivariate case by applying a union bound over the dimensions of $\Delta_\theta$.

\section{Experimental Details}
\subsection{Datasets}\label{app:datasets}
Here we provide a brief description of each dataset used for the real data experiments in \cref{sec:real_data}. For additional details, the reader may refer to \citet{Angelopoulos2023}. All of the datasets were downloaded from the examples provided as part of the \texttt{ppi-py} package~\citep{Angelopoulos2023a}.

\paragraph{AlphaFold.}
The \textsc{alphafold} dataset contains the following features for \(N=10\,802\) protein residues analysed by \citet{Bludau2022}: whether the residue is phosphorylated ($Z_i \in \{0,1\}$), whether the residue is part of an intrinsically disordered region (IDR, $Y_i \in \{0,1\}$), and the prediction of the AlphaFold model~\citep{Jumper2021} for the probability of $Y_i$ being equal to one ($f(X_i) \in [0,1]$). The goal is to estimate the odds ratio of a protein being phosphorylated and being part of an IDR, i.e.
\begin{equation*}
    \theta^\star = \frac{\mu_1 / (1 - \mu_1)}{\mu_0 / (1 - \mu_0)},
\end{equation*}
where $\mu_1 = \mathrm{Pr}(Y = 1 \mid Z = 1)$ and $\mu_0 = \mathrm{Pr}(Y = 1 \mid Z = 0)$. Following \citet{Angelopoulos2023}, given $\alpha \in (0, 1)$, we construct $1 -
\alpha / 2$ confidence intervals $\mathcal{C}_0 = [l_0, u_0]$ and $\mathcal{C}_1 = [l_1, u_1]$ for $\mu_0$ and $\mu_1$, respectively. Then, by a union bound, the interval
\begin{equation*}
    \mathcal{C} = \left\{\frac{c_1}{1 - c_1} \cdot \frac{1 - c_0}{c_0} \colon c_0 \in \mathcal{C}_0, c_1 \in \mathcal{C}_1 \right\} = \left[\frac{l_1}{1 - l_1} \cdot \frac{1 - u_0}{u_0}, \frac{u_1}{1 - u_1} \cdot \frac{1 - l_0}{l_0}\right]
\end{equation*}
has coverage at least $1 - \alpha$. Note that the union bound above may result in a conservative confidence interval, leading to coverage significantly larger than $1 - \alpha$ in practice, as in the left panel of \cref{fig:real_main}.

\paragraph{Forest.}
The \textsc{forest} dataset contains the following features for $N = 1\,596$ parcels of land in the Amazon rainforest examined during field visits~\citep{Bullock2020}: whether the parcel has been subject to deforestation ($Y_i \in \{0, 1\}$) and the prediction of a gradient-boosted tree model for the probability of $Y_i$ being equal to one ($f(X_i) \in [0, 1]$). The goal is to estimate the fraction of Amazon rainforest lost to deforestation, i.e.~$\theta^\star = \mathbb{E}[Y]$.

\paragraph{Galaxies.}
The \textsc{galaxies} dataset contains the following features for $N = 16\,743$ images from the Galaxy Zoo 2 initiative \citep{Willett2013}: whether the galaxy has spiral arms ($Y_i \in \{0, 1\}$) and the prediction of a ResNet50 model \citep{He2016} for the probability of $Y_i$ being equal to one ($f(X_i) \in [0, 1]$). The goal is to estimate the fraction of galaxies with spiral arms, i.e.~$\theta^\star = \mathbb{E}[Y]$.

\paragraph{Genes.}
The \textsc{genes} dataset contains the following features for $N = 61\,150$ gene promoter sequences: the expression level of the gene induced by the promoter and the prediction of a transformer model for the same quantity \citep{Vaishnav2022}. The goal is to estimate the median expression level across genes.

\paragraph{Census.}
The \textsc{census} dataset contains the following features for $N = 380\,091$ individuals from the 2019 California census: the individual's age, sex, and yearly income, as well as the prediction of a gradient-boosted tree model trained on the previous year's raw data for the individual's income. The goal is to estimate the ordinary least squares (OLS) regression coefficients when regressing income on age and sex.

\paragraph{Healthcare.}
The \textsc{healthcare} dataset contains the following features for $N = 318\,215$ individuals from the 2019 California census: the individual's yearly income and whether they have health insurance ($Y_i \in \{0, 1\}$), as well as the prediction of a gradient-boosted tree model trained on the previous year's raw data for the probability of $Y_i$ being equal to one ($f(X_i) \in [0, 1]$). The goal is to estimate the logistic regression coefficient when regressing health insurance status on income.

\subsection{Implementation}
Code implementing the FAB-PPI method is written in Python and made available at \url{https://github.com/stefanocortinovis/fab-ppi}. Comparisons with standard PPI are performed using the \verb|ppi-py| package~\citep{Angelopoulos2023a}. All of the experiments presented here were run locally on an Intel Core i7-11850H CPU.

\section{Additional Results}
\label{app:additional_results}
\subsection{Experiments with Synthetic Data}
The complete results for the experiments discussed in \cref{sec:experiments} are presented here. The legend names for the figures are as in \cref{sec:experiments}.

\subsubsection{Biased Predictions Simulation Study}\label{app:simul_bias_supplementary}
\cref{fig:simul_bias_supplementary} shows the average MSE, CI volume, and CI coverage as a function of the bias level $\gamma$ for the biased predictions study in \cref{sec:synthetic_data}.
\begin{figure}[ht!]
    \centering
    \includegraphics[width=\textwidth]{./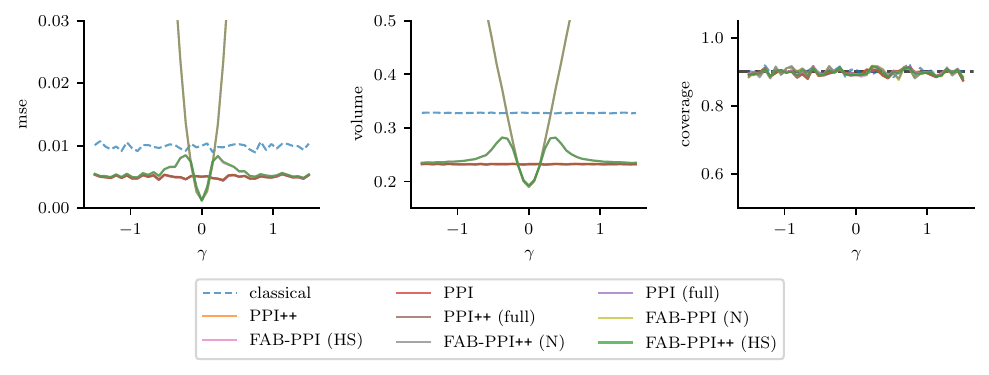}
    \caption{Full results for the biased predictions study. The left, middle, and right panels show the average MSE, CI volume, and CI coverage as the bias level $\gamma$ varies.}
    \label{fig:simul_bias_supplementary}
\end{figure}
Compared to \cref{fig:simul_bias_width_main}, we include results for the non power-tuned methods, as well as for the ones that take into account the uncertainty in the measure of fit $m_\theta$ (i.e.~PPI (full) and PPI\texttt{++} (full)). In this example, power tuning does not play a significant role and the same conclusions as in \cref{sec:synthetic_data} hold. In particular, standard PPI induces shorter CIs than classical inference with constant volume across bias levels. On the other hand, FAB methods induce shorter CIs when the predictions are good. As the prediction bias increases, the volume of the FAB CIs with Gaussian prior grows without bound, while the horseshoe prior eventually reverts to the PPI intervals. Furthermore, the coverage plot shows that the methods tested achieve similar coverage to the nominal level and to PPI (full) and PPI\texttt{++} (full).

\subsubsection{Noisy Predictions Simulation Study}\label{app:simul_noisy_supplementary}
\cref{fig:simul_noisy_supplementary} shows the average MSE, CI volume, and CI coverage as a function of $n$ for the values of $\sigma_Y$ considered in the noisy predictions study of \cref{sec:synthetic_data}.
\begin{figure}[ht!]
    \centering
    \includegraphics[width=\textwidth]{./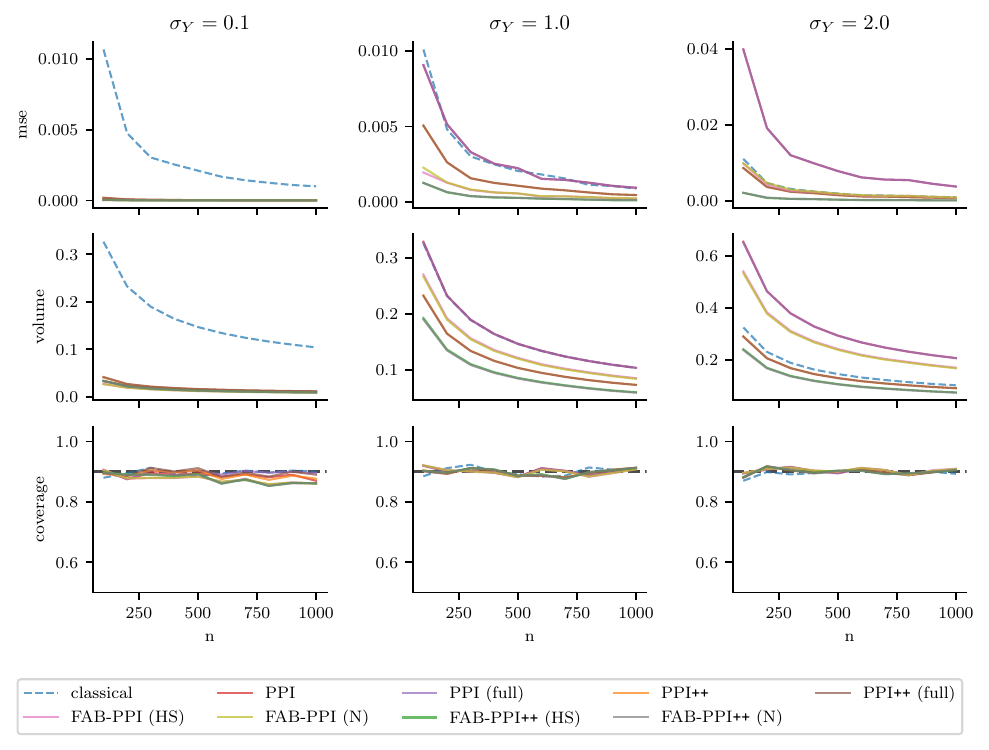}
    \caption{Full results for the noisy predictions study. The left, middle, and right panels correspond to noise levels $\sigma_Y = 0.1, 1, 2$, respectively. The top, middle, and bottom rows show average MSE, CI volume, and CI coverage, respectively.}
    \label{fig:simul_noisy_supplementary}
\end{figure}
Compared to \cref{fig:simul_noisy_width_main}, we include results for the methods that use the Gaussian prior (FAB-PPI (N) and FAB-PPI\texttt{++} (N)) and those that take into account the uncertainty in the measure of fit $m_\theta$ (i.e.~PPI (full) and PPI\texttt{++} (full)). Like the CI volume plots in the main text, the MSE plots clearly show the benefits of both power tuning and adaptive shrinkage through the horseshoe prior: as $\sigma_Y$ increases, the power-tuned methods clearly outperform the standard alternatives, while shrinkage always helps compared to standard PPI because the predictions remain unbiased. In this case, the Gaussian prior performs similarly to the horseshoe as the prediction rule $f$ is unbiased. The coverage plots confirm that all methods achieve comparable coverage across noise levels.

\subsection{Experiments with Real Data}\label{app:real_supplementary}
\subsubsection{Mean Estimation}\label{app:mean_estimation}
\paragraph{Full Comparison.}
\cref{fig:real_supplementary} shows the average MSE, CI volume, and CI coverage as a function of $n$ for the three datasets considered in \cref{sec:real_data}.
\begin{figure}[ht!]
    \centering
    \includegraphics[width=\textwidth]{./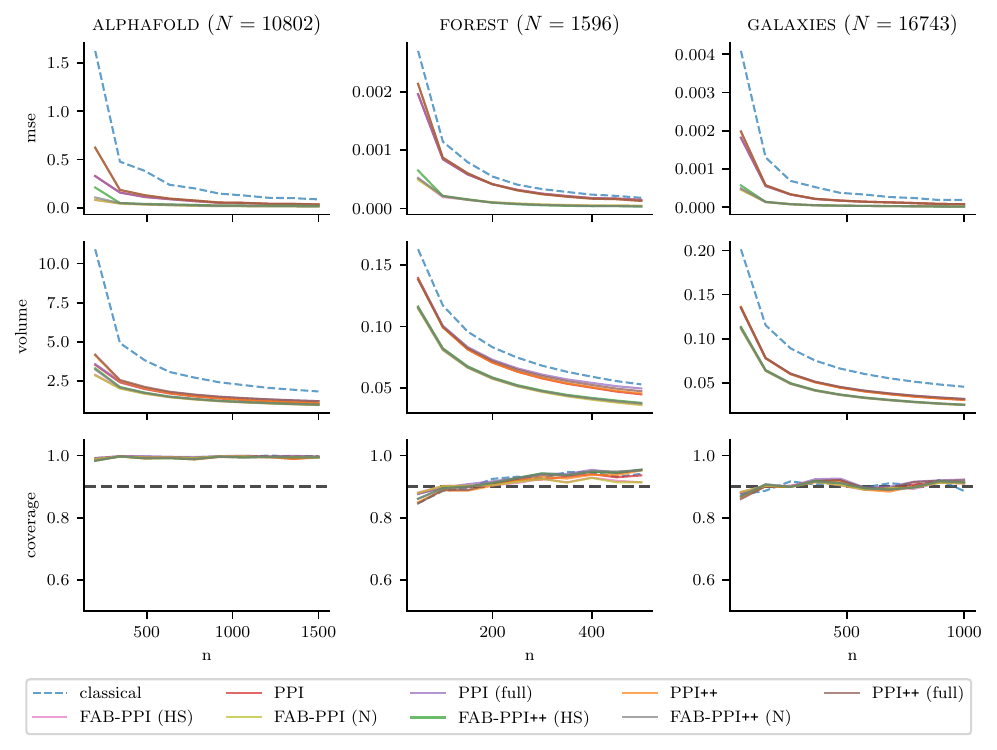}
    \caption{Full results for  mean estimation experiment on real data. The left, middle, and right panels correspond to the \textsc{alphafold}, \textsc{galaxies}, and \textsc{forest} datasets, respectively. The top, middle, and bottom rows show average MSE, CI volume, and CI coverage, respectively, over $1\,000$ repetitions for $\alpha = 0.1$.}
    \label{fig:real_supplementary}
\end{figure}
Compared to \cref{fig:real_main}, we include results for the non power-tuned methods, as well as for the ones that take into account the uncertainty in the measure of fit $m_\theta$ (i.e.~PPI (full) and PPI\texttt{++} (full)). The results are consistent with those presented in \cref{sec:real_data}. In particular, FAB methods outperform the standard PPI alternatives and classical inference, while achieving comparable coverage. For the datasets and the values of $n$ considered, power-tuned methods perform similarly to the non-tuned ones. Among the FAB methods, the horseshoe and Gaussian priors achieve similar performance.

\paragraph{Example Intervals.}
\cref{fig:real_intervals_supplementary} shows 10 randomly chosen intervals for the classical, PPI\texttt{++}, and FAB-PPI\texttt{++} methods for the three datasets considered in \cref{sec:real_data} and different choices of the number of labelled observations $n$.
\begin{figure}[ht!]
    \centering
    \includegraphics[width=\textwidth]{./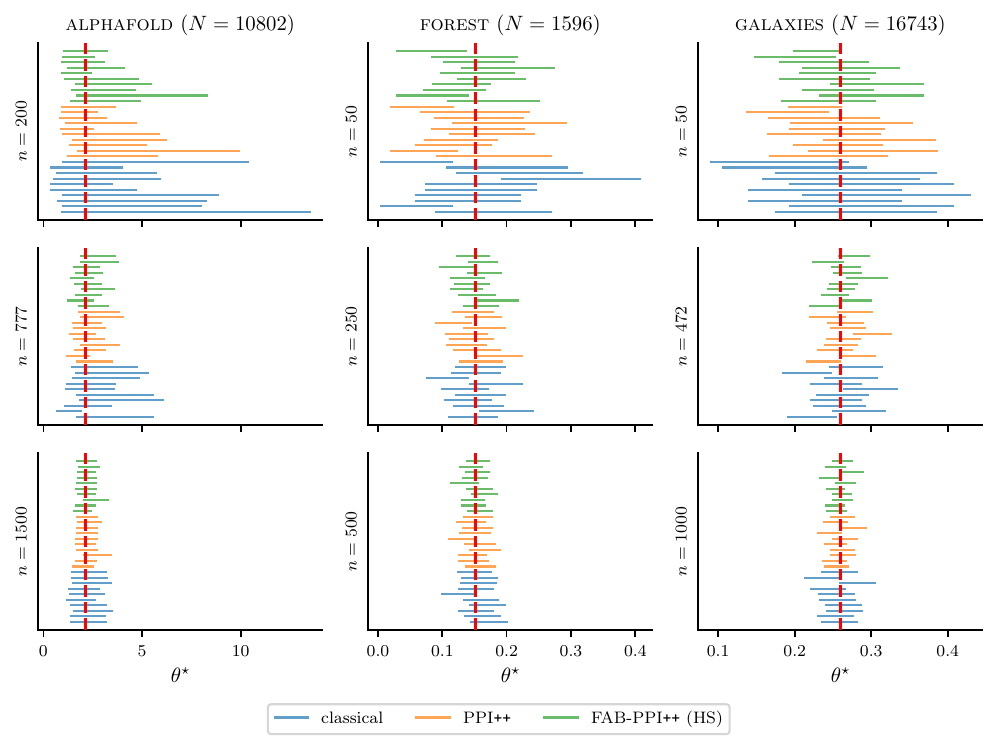}
    \caption{Each subfigure includes 10 randomly chosen intervals for the classical, PPI\texttt{++} and FAB-PPI\texttt{++} methods. The left, middle, and right panels refer to the \textsc{alphafold}, \textsc{galaxies}, and \textsc{forest} datasets, respectively. The top, middle, and bottom rows correspond to different values of $n$.}
    \label{fig:real_intervals_supplementary}
\end{figure}

\paragraph{Varying the Prior Scale.}
We repeat the mean estimation experiment on the \textsc{forest} dataset while varying the scale of the horseshoe prior used for FAB-PPI\texttt{++} in \cref{app:mean_estimation}. In addition to the scale $\widehat{\sigma}$ used in the main text, we consider the sample-independent scale $1 / \sqrt{n}$ and the data-independent scale $1$. As already mentioned, the computation of the FAB-PPI confidence regions under a horseshoe prior with scale other than $\widehat{\sigma}$ involves numerical integration to compute the corresponding marginal likelihood.
\Cref{fig:real_forest_scaled_powertuning} shows the average MSE, CI volume, and CI coverage for each of these choices, as well as for classical inference and PPI\texttt{++}.
\begin{figure}[ht!]
    \centering
    \includegraphics[width=\textwidth]{./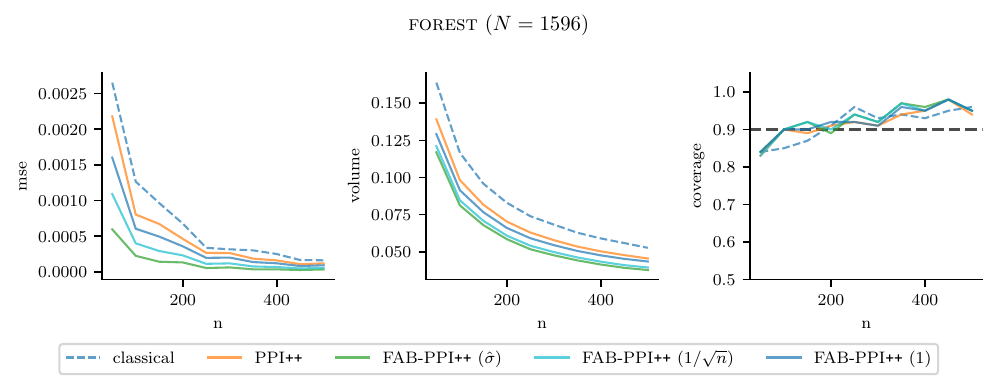}
    \caption{Mean estimation experiment on the \textsc{forest} dataset with varying horseshoe prior scale. The left, middle, and right panels show average MSE, CI volume, and CI coverage over $100$ repetitions for $\alpha = 0.1$.}
    \label{fig:real_forest_scaled_powertuning}
\end{figure}
While the scale $\widehat{\sigma}$ achieves the best performance, the other scales also provide shorter CIs than classical inference and PPI\texttt{++}. In particular, the sample independent scale $1 / \sqrt{n}$ results in good performance across all metrics without requiring the estimation of $\widehat{\sigma}$.

\subsubsection{Logistic Regression}\label{app:logistic}
\cref{fig:real_logistic_withgaussian} shows the average MSE, CI volume, and CI coverage as a function of $n$ for the logistic regression experiment on the \textsc{healthcare} dataset mentioned in \cref{sec:real_data}.
\begin{figure}[ht!]
    \centering
    \includegraphics[width=\textwidth]{./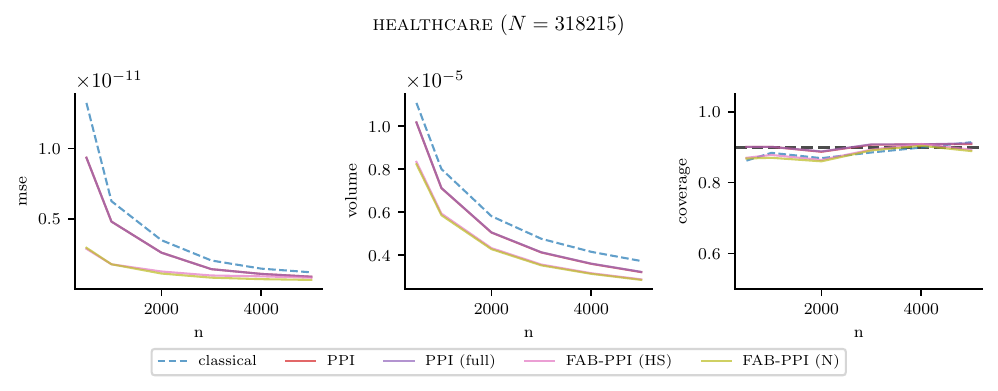}
    \caption{Logistic regression experiment on the \textsc{healthcare} dataset. The left, middle, and right panels show average MSE, CI volume, and CI coverage over $1000$ repetitions for $\alpha = 0.1$.}
    \label{fig:real_logistic_withgaussian}
\end{figure}
As mentioned in the main text, FAB methods outperform the standard PPI alternatives and classical inference, while achieving comparable coverage. Among the FAB methods, the horseshoe and Gaussian priors achieve similar performance.

\subsubsection{Quantile Estimation}\label{app:quantile}
\cref{fig:real_quantile_withgaussian} shows the average MSE, CI volume, and CI coverage as a function of $n$ for the quantile estimation experiment on the \textsc{genes} dataset mentioned in \cref{sec:real_data}.
\begin{figure}[ht!]
    \centering
    \includegraphics[width=\textwidth]{./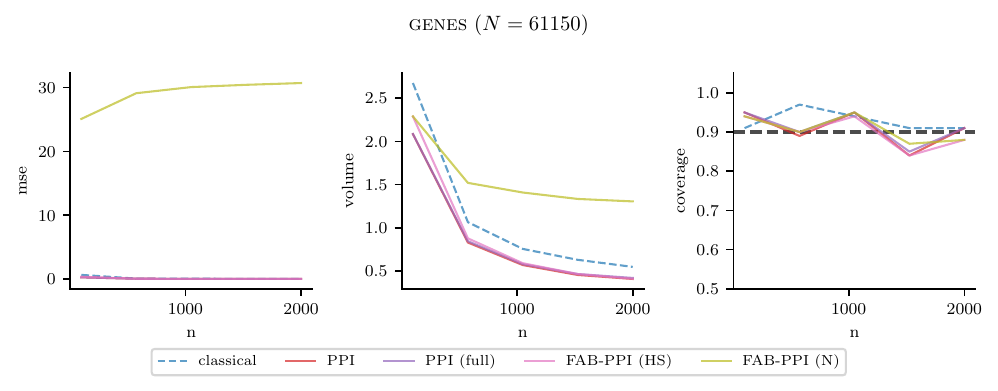}
    \caption{Quantile estimation experiment on the \textsc{genes} dataset. The left, middle, and right panels show average MSE, CI volume, and CI coverage over $100$ repetitions for $\alpha = 0.1$.}
    \label{fig:real_quantile_withgaussian}
\end{figure}
The predictions contained in this dataset are highly biased, and this is reflected in the performance of the FAB-PPI methods. In particular, the Gaussian prior underperforms both classical inference and standard PPI, while the horseshoe prior achieves similar performance to standard PPI thanks to its robustness against large bias levels.

\subsubsection{Linear Regression}\label{app:ols}
\cref{fig:real_ols_all} shows the average MSE, CI volume, and CI coverage as a function of $n$ for the linear regression experiment on the \textsc{census} dataset mentioned in \cref{sec:real_data}.
\begin{figure}[ht!]
    \centering
    \includegraphics[width=\textwidth]{./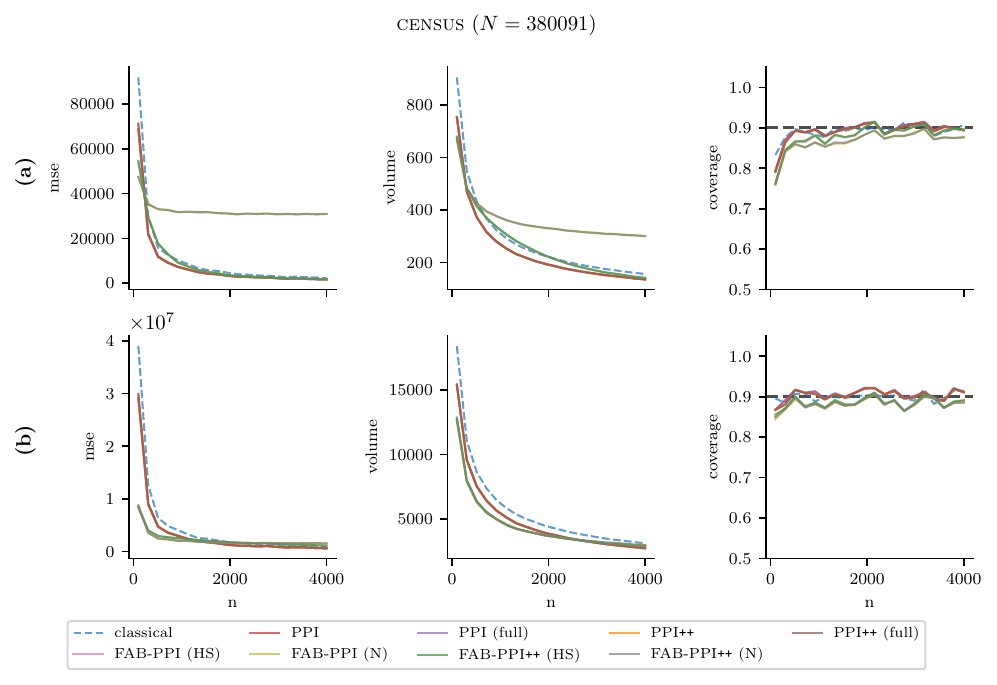}
    \caption{Linear regression experiment on the \textsc{census} dataset. The (a) and (b) panels correspond to the two covariates in the dataset. The left, middle, and right panels show average MSE, CI volume, and CI coverage over $1000$ repetitions for $\alpha = 0.1$.}
    \label{fig:real_ols_all}
\end{figure}
More specifically, panels (a) and (b) correspond to the OLS parameters associated with the \textit{age} and \textit{sex} covariates, respectively. On the one hand, FAB-PPI seems to perform well for the \textit{sex} covariate, with similar performance between the Gaussian and horseshoe priors, and slightly improved MSE and CI volume compared to classical inference and standard PPI. On the other hand, the performance of FAB-PPI for the \textit{age} covariate seems to be affected by bias in the dataset predictions. In particular, FAB-PPI under the Gaussian prior underperforms the alternatives for all $n$. On the other hand, while the horseshoe prior achieves worse performance than the other methods for small $n$, its performance improves as $n$ grows, and it eventually matches standard PPI. This suggests that, as $n$ increases and $\var(\widehat{\Delta}_\theta)$ decreases, the observed value of the rectifier is increasingly considered as extreme, causing the influence from the horseshoe prior to eventually vanish thanks to its robustness to extreme bias levels.

\end{document}